\newtheorem{theorem}{Theorem}[section]
\newtheorem{proposition}[theorem]{Proposition}
\newenvironment{proof}
 {{\sl Proof.}\hspace*{1 ex}}%
 {{\nopagebreak\hspace*{\fill}$\Box$\par\vspace{12pt}}}
\DeclarePairedDelimiterX{\inner}[2]{\langle}{\rangle}{#1, #2}
\newcommand{\Z}{\mathbb{Z}}
\newcommand{\R}{\mathbb{R}}
\newcommand{\ceil}[1]{{\left\lceil{#1}\right\rceil}}
\newcommand{\floor}[1]{{\left\lfloor{#1}\right\rfloor}}
\newcommand{\rint}[1]{{\left\lfloor{#1}\right\rceil}}
\begin{document}
%\onehalfspacing

% Title
\begin{center}
{\bf\Large On the implementation of a global optimization method for mixed-variable problems}
\end{center}
\vspace{5pt}
% Authors
\begin{center}
Giacomo Nannicini \\
{\textit{{\small IBM Quantum, IBM T.J.~Watson research center, Yorktown Heights, NY}}}
\end{center}

\begin{abstract}
  We describe the optimization algorithm implemented in the
  open-source derivative-free solver RBFOpt. The algorithm is based on
  the radial basis function method of Gutmann and the metric
  stochastic response surface method of Regis and Shoemaker. We
  propose several modifications aimed at generalizing and improving
  these two algorithms: (i) the use of an extended space to represent
  categorical variables in unary encoding; (ii) a refinement phase to
  locally improve a candidate solution; (iii) interpolation models
  without the unisolvence condition, to both help deal with
  categorical variables, and initiate the optimization before a
  uniquely determined model is possible; (iv) a master-worker
  framework to allow asynchronous objective function evaluations in
  parallel. Numerical experiments show the effectiveness of these
  ideas.
\end{abstract}

\section{Introduction}
An optimization problem without any structural information on the
objective function or the constraints, but for which we have the
ability to evaluate them at given points, is called a {\em black-box}
problem. The area of {\em derivative-free optimization} is dedicated
to the study of optimization algorithms that do not rely on computing
the partial derivatives of the objective function, and it is naturally
applied to black-box problems. Many optimization problems in
engineering are solved by treating them as a black box, for two main
reasons: first, the objective function may not be known in an explicit
form, e.g., when it is the output of a complex computer simulation;
second, even if derivatives may exist and be computable, the effort
required may make it impractical, or the low accuracy of their
computation may make them unreliable.

This paper discusses the implementation of a global derivative-free
optimization algorithm that is specifically aimed at black-box
problems with expensive objective function evaluations. The algorithm
accepts as input problems of this form:
\begin{equation}
  \label{eq:problem}
  \left. \begin{array}{rrcl}
    \min & f(x, w) && \\
    & x & \in & [x^L, x^U] \\
    & x & \in & \R^{n_r} \times \Z^{n_d} \\
    & w & \in & \bigtimes\limits_{h=1}^{n_c} S_h,
    \end{array}
  \right\}
\end{equation}
where for $h=1,\dots,n_c$, $S_h$ is an (unordered) finite set, and
$x^L, x^U \in \R^{n_r + n_d}$ are vectors of finite lower and upper
bounds. Problem \eqref{eq:problem} is a mixed-variable problem,
because it involves continuous variables, discrete integer variables,
and discrete categorical variables (i.e., variables that belong to a
finite, not necessarily ordered set). Mixed-variable problems are
notoriously difficult to solve, but they have wide-ranging
applications, see e.g., \cite{audet2001pattern}. Note that
\eqref{eq:problem} is unconstrained: while constraints greatly
increase modeling capabilities, the majority of the derivative-free
optimization literature deals with unconstrained problems. This is
mainly due to two reasons: first, simple constraints can be
incorporated by penalizing their violation in the objective function;
second, problem \eqref{eq:problem} is already difficult to solve, so
unless the constraints are relatively easy to handle, complicated
(possibly black-box) constraints may make its solution too difficult
in practice\footnote{For an example of a difficult constrained
  black-box optimization problem, we refer the interested reader to
  the MOPTA 08 problem discussed at
  \url{https://www.miguelanjos.com/jones-benchmark}; see also
  \cite{regis2014constrained}.}.

Among the numerous methodologies proposed for derivative-free
optimization, there is an emerging consensus that algorithms based on
surrogate models typically have better global performance on nonconvex
problems with continuous variables. A {\em surrogate model} is a model
of the unknown objective function, that can be used by the
optimization algorithm as a proxy to obtain estimates of the objective
function value at unseen points in the domain. The algorithm discussed
in this paper employs a surrogate model constructed as a weighted
combination of radial basis functions (RBFs), plus a polynomial
tail. At each iteration, the algorithm uses the surrogate model to
determine the next point at which the objective function should be
evaluated; this decision is based on criteria first introduced in
\cite{gutmann01,regis07stochastic}, together with the modifications
discussed in \cite{nannirbfopt}. We generalize these approaches in
multiple ways, the most notable of which are:
\begin{enumerate}[(i)]
\item We introduce a surrogate model defined in an {\em extended space},
  mapping categorical variables to their unary encoding, and showing
  that all steps of the optimization algorithm can be performed in a
  natural way in either the original or the extended space.
\item We employ a periodic {\em refinement phase}, aimed at improving
  the best known solution with a local search. The local search
  consists of a small number of steps of an iterative gradient descent
  method, based on a linear local model of the objective function.
\item We do not enforce the {\em unisolvence condition} of the
  surrogate models, to both help with categorical variables (for
  reasons that will be discussed later), and to be able to start
  optimizing before a unique surrogate model is available.
\item We describe a {\em parallel implementation} of the algorithm
  that allows asynchronous, simultaneous objective function
  evaluations.
\end{enumerate}
The resulting optimization algorithm is implemented in an open-source
library called RBFOpt, first introduced in \cite{nannirbfopt}. The
paper \cite{nannirbfopt} is based on RBFOpt version 1.0, while this
paper discusses innovations introduced between version 1.0 and the
current 4.2 version\footnote{\cite{nannirbfopt} is based on Gutmann's
RBF method \cite{gutmann01}, which was the default global search
method in RBFOpt before being replaced in version 1.2.}. We give a
full description of several important implementation details that were
not previously discussed. Numerical experiments show the effectiveness
of these modifications on a set of nonconvex problems, as compared to
the algorithm of \cite{nannirbfopt}, and as compared to several
open-source derivative-free optimization solvers: NOMAD \cite{nomad},
Nevergrad \cite{nevergrad}, Optuna\cite{optuna2019}, Scikit-Optimize
\cite{scikit-optimize}, SMAC \cite{hutter11smac}. We provide an
example of a typical application by evaluating the performance of
RBFOpt for the optimization of the hyperparameters of a random forest
classifier on a given dataset. We remark that RBFOpt is designed for
deterministic black-box optimization problems, rather than
hyperparameter optimization problems where the result of each
objective evaluation is typically a sample from a random variable;
however, we can use RBFOpt by simply fixing the dataset and the random
seed used to train the classifier, thereby making the objective
function deterministic. This runs the risk of overfitting, as RBFOpt
only observes one realization of a generalization error estimator, but
in practice it can be an acceptable tradeoff. Results show that the
main innovations discussed above have a significant impact on
performance not only on artificial test functions, but also in this
specific hyperparameter optimization application\footnote{RBFOpt is
used in a commercial product to optimize hyperparameters of machine
learning models; while our benchmark set contains artificially
generated functions, development is largely driven by hyperparameter
optimization applications.}.

The rest of this paper is organized as follows. In
Sect.~\ref{s:surrogate} we review RBF interpolation. In
Sect.~\ref{s:extspace} we discuss two natural approaches to
incorporate categorical variables into the surrogate model, setting
the stage for an optimization algorithm. Sect.~\ref{s:algorithm}
describes the optimization algorithm, including several of the main
contributions of this paper. Finally, Sect.~\ref{s:exp}
provides an extensive numerical evaluation of the optimization
algorithm, and Sect.~\ref{s:conclusion} concludes the paper.

\section{Surrogate models with radial basis functions}
\label{s:surrogate}
Given $k$ distinct points $x^1, \dots, x^k \in \R^n$, a RBF
interpolant $s_k$ to the points $x^1, \dots, x^k \in \R^n$ is defined
as:
\begin{equation}
  \label{eq:s_k}
  s_k(x) := \sum_{i = 1}^k \lambda_i \phi(\|x - x^i\|) + p(x),
\end{equation}
where $\phi: \R_+ \to \R$, $\lambda_1,\dots,\lambda_k \in \R$ and $p$
is a polynomial of degree $d$. We use subscripts to refer to elements
of a vector, and superscripts to denote distinct vectors, e.g.,
$x^i_j$ is the $j$-the element of the $i$-th vector of the collection
$x^1,\dots,x^k$; the superscripts should not be confused with
exponents, because except for Table \ref{tab:rbf_degree}, all
polynomials in the rest of the paper do not involve variables --- only
scalars. Notice that here and in the rest of this paper, for
notational convenience we use $n$ as a general shorthand for the
dimension of the space in which the interpolation points live; the
value for $n$ is specified in the next section.  Furthermore, we
remark that here $x$ is a generic variable name, and should not be
intended to refer only to continuous and integer variables as in
\eqref{eq:problem}. The degree $d$ of the polynomial is chosen
according to Table \ref{tab:rbf_degree}, depending on the type of
radial basis functions $\phi(r)$.
\begin{table}[tb]
\begin{center} 
\begin{tabular}{l l r}
\hline 
$\phi(r)$ && $d$ \\ \hline
$r$ & (linear) & 0 \\
$r^3$ &(cubic) & 1 \\
$\sqrt{r^2+\gamma^2}$ &(multiquadric)&0 \\
$r^2\log{r}$&(thin plate spline)&1 \\
%$\frac{1}{\sqrt{r^2+\gamma^2}}$ &(inverse multiquadric)&-1\\
$e^{-\gamma r^2}$ &(Gaussian)&-1 \\
\hline
\end{tabular}
\end{center}
\caption{RBF functions available in RBFOpt.}
\label{tab:rbf_degree}
\end{table}

If $\phi(r)$ is cubic or thin plate spline, we obtain an interpolant
of the form:
\begin{equation}
  s_k(x) := \sum_{i = 1}^k \lambda_i \phi(\|x - x^i\|) +
  \alpha^{\top} \begin{pmatrix} x \\ 1 \end{pmatrix},
\end{equation}
where $\alpha \in \R^{n+1}$. The values of $\lambda_i, \alpha$ can be determined
by solving the following linear system:
\begin{equation}
  \label{eq:phisystem}
  \begin{pmatrix} \Phi & P \\ P^{\top} & 0_{(n+1)\times
      (n+1)} \end{pmatrix} \begin{pmatrix} 
    \lambda \\ \alpha \end{pmatrix} = \begin{pmatrix} F \\ 0_{n+1} \end{pmatrix},
\end{equation}
with:
\begin{equation*}
  \Phi = \left(\phi(\| x^i - x^j \|)\right)_{i,j=1,\dots,k},
  \quad
  P = \begin{pmatrix} (x^1)^{\top} & 1 \\ \vdots & \vdots \\ (x^k)^{\top} &
    1 \end{pmatrix},
  \quad
  \lambda = \begin{pmatrix} \lambda_1 \\ \vdots\\ \lambda_k 
  \end{pmatrix},
  \quad
  F = \begin{pmatrix} f(x^1) \\ \vdots \\ f(x^k) \end{pmatrix}.
\end{equation*}
If $k \ge n+1$, $\text{rank}(P)= n + 1$, and the points
$x^1,\dots,x^k$ are pairwise distinct, then \eqref{eq:phisystem} is
nonsingular; this is a sufficient but not necessary condition, used by
Gutmann's RBF algorithm \cite{gutmann01} to guarantee uniqueness of the
interpolant on problems with continuous variables.

If $\phi(r)$ is linear or multiquadric, $d=0$ and the system
\eqref{eq:phisystem} has a simpler expression: $P$ is the all-one
column vector of dimension $k$.  In the Gaussian case, $d = -1$ and
$P$ is removed from system (\ref{eq:phisystem}).  The dimensions of
the zero matrix and vector in (\ref{eq:phisystem}) are adjusted
accordingly.

In the setting of this paper, the matrix of \eqref{eq:phisystem} may
be singular in some situations (see Sect.~\ref{s:lssol}), hence this
assumption no longer holds; however, we generally strive to obtain a
nonsingular linear system so that the coefficients of $s_k$ can be
uniquely determined.

\section{Optimization with categorical variables in extended space}
\label{s:extspace}
In the rest of this paper, we assume that $x^L_j, x^U_j \in \Z$ for
all $j = n_r+1,\dots,n_r+n_d$. For $i=1,\dots,n_c$, we define $m_h =
|S_h|$ and $\hat{m}_h = \sum_{k=1}^{h} m_k$, with $\hat{m}_0 = 0$ for
convenience. We use $\inner{\cdot}{\cdot}$ to denote inner
products. Since \eqref{eq:problem} has categorical variables, which
are difficult to handle in any mathematical optimization framework due
to their unstructured nature, we work with two inexact formulations
for \eqref{eq:problem}. The first formulation, which we call {\em
  original space} formulation, simply replaces the categorical
variables with integer variables. Define the vectors $x^{o,L}, x^{o,U}
\in \R^{n_r+n_d+n_c}$ as:
\begin{equation*}
  x^{o,L}_i = \begin{cases}
    x^L_i & \text{if } i \le n_r + n_d \\
    1 & \text{otherwise}
  \end{cases}
  \qquad
  x^{o,U}_i = \begin{cases}
    x^U_i & \text{if } i \le n_r + n_d \\
    m_{i-n_r-n_d} & \text{otherwise}.
  \end{cases}
\end{equation*}
Then the original space formulation is defined as:
\begin{equation}
  \label{eq:origspace}
  \left. \begin{array}{rrcl}
    \min & f(x, C(x_{n_r+n_d+1},\dots,x_{n_r+n_d+n_c})) && \\
    & x & \in & [x^{o,L}, x^{o,U}] \\
    & x & \in & \R^{n_r} \times \Z^{n_d + n_c} \\
    \end{array}
  \right\}
\end{equation}
where $C : \bigtimes_{h=1}^{n_c} [1, \dots, m_h] \to \bigtimes_{h=1}^{n_c}
S_h$ is a one-to-one map of the integers $[1, \dots, m_h]$ to elements of the
$m_h$-dimensional set $S_h$. Notice that to
construct the function $C$, we must arbitrarily define an order of
each set $S_h$. This allows us to apply any algorithm for
mixed-integer black-box problems directly to
\eqref{eq:problem}. However, it is an inherently flawed approach,
because the sets $S_h$ are originally unordered. Since virtually all
derivative-free optimization algorithms use metric information, we are
imposing on the problem artificial structure that is not reflected in
its original formulation.

The second formulation, which we call {\em extended space}
formulation, uses a unary encoding for the categorical variables. Define the vectors $x^{e,L},
x^{e,U} \in \R^{n_r+n_d+\hat{m}_{n_c}}$ as:
\begin{equation*}
  x^{e,L}_i = \begin{cases}
    x^L_i & \text{if } i \le n_r + n_d \\
    0 & \text{otherwise}
  \end{cases}
  \qquad
  x^{e,U}_i = \begin{cases}
    x^U_i & \text{if } i \le n_r + n_d \\
    1 & \text{otherwise}.
  \end{cases}
\end{equation*}
Then the extended space formulation is defined as:
\begin{equation}
  \label{eq:extspace}
  \left. \begin{array}{rrcl}
    \min & f(x, \hat{C}(x_{n_r+n_d+1},\dots,x_{n_r+n_d+\hat{m}_{n_c}})) && \\
    & x & \in & [x^{e,L}, x^{e,U}] \\
    & x & \in & \R^{n_r} \times \Z^{n_d} \times \{0,1\}^{\hat{m}_{n_c}} \\
    \forall h=1,\dots,n_c & \sum_{j=\hat{m}_{h-1}+1}^{\hat{m}_h} x_{n_r+n_d+j} & = & 1.
    \end{array}
  \right\}
\end{equation}
where $\hat{C} : \{0,1\}^{\hat{m}_{n_c}} \to \bigtimes_{h=1}^{n_c}
S_h$ maps the binary vector
$(x_{n_r+n_d+1},\dots,x_{n_r+n_d+\hat{m}_{n_c}}) \in
\{0,1\}^{\hat{m}_{n_c}}$ to a choice of elements from the sets $S_h$,
by viewing it as the juxtaposition of the characteristic vectors of
the sets $S_h$. This mapping assigns one value to each categorical
variable, because of the constraints $\sum_{j=\hat{m}_{h-1} +
  1}^{\hat{m}_h} x_{n_r+n_d+j} = 1$ for all $h=1,\dots,n_c$. In the following we
denote the feasible region of \eqref{eq:extspace} as
$\Omega_e$. Notice that similar to the previous formulation,
\eqref{eq:extspace} also suffers from the flaw of imposing an order on
the sets $S_h$; however, we show next that a surrogate model of
\eqref{eq:extspace} with radial basis functions ignores the order,
thereby avoiding ranking points based on artificial metric information
(i.e., that does not exist in the original problem).
\begin{proposition}[Invariance to permutation]
  \label{prop:s_k_extspace}
  Let $x^1,\dots,x^k \in \Omega_e \subset \R^{n_r+n_d} \times
  \{0,1\}^{\hat{m}_{n_c}}$ with corresponding function values
  $y^1,\dots,y^k$. Let $\pi = \bigtimes_{h=1}^{n_c} \pi^h$ be a
  permutation of $\{0,1\}^{\hat{m}_{n_c}}$, where for $h=1,\dots,n_c$,
  $\pi^h$ is a permutation of $\{0,1\}^{m_h}$. Let $\pi^e :=
  I_{n_r+n_d} \times \pi$ be the extension of $\pi$ to an operator on
  $(n_r+n_d+\hat{m}_{n_c})$-dimensional vectors that acts as the
  identity on the first $n_r+n_d$ components. Let $\lambda \in \R^k,
  \alpha \in \R^{n_r+n_d+\hat{m}_{n_c}}, \alpha^0 \in \R$ define an interpolant
  \begin{equation}
    \label{eq:s_k_extspace}
    s_k(x) := \sum_{i = 1}^k \lambda_i \phi(\|x - x^i\|) + \inner{\alpha}{x} + \alpha^0
  \end{equation}
  to the points $x^1,\dots,x^k$ with values
  $y^1,\dots,y^k$. (If the polynomial tail is of degree $0$ according
  to Table~\ref{tab:rbf_degree}, then $\alpha$ is the all-zero
  vector; if the degree is $-1$, $\alpha^0$ is 0 as well.) Then for any
  $x \in \R^{n_r+n_d} \times \{0,1\}^{\hat{m}_{n_c}}$, the
  function $s'_k$, defined as:
  \begin{equation*}
    s'_k(x) := \sum_{i = 1}^k \lambda_i \phi(\|x - \pi^e(x^i)\|) + \inner{\pi^e(\alpha)}{x} + \alpha^0,
  \end{equation*}
  is such that $s_k(x) = s'_k(\pi^e(x))$.
\end{proposition}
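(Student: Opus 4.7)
The plan is to reduce everything to the orthogonal invariance of Euclidean norms and inner products under coordinate permutations. The first step is to recognize that the operator $\pi^e = I_{n_r+n_d} \times \pi$ defined in the statement is a coordinate permutation on $\R^{n_r+n_d+\hat{m}_{n_c}}$: the leading block is the identity, while each trailing block $\pi^h$ permutes the $m_h$ coordinates of one unary-encoded categorical variable. In particular $\pi^e$ is linear and its matrix representation is a permutation matrix, so for all $u, v$ in the ambient space we have $\|\pi^e(u)\| = \|u\|$ and $\inner{\pi^e(u)}{\pi^e(v)} = \inner{u}{v}$.

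The next step is to substitute $\pi^e(x)$ into the definition of $s'_k$ and simplify the two additive pieces separately. Linearity gives $\pi^e(x) - \pi^e(x^i) = \pi^e(x - x^i)$, and then orthogonality yields $\|\pi^e(x - x^i)\| = \|x - x^i\|$, so the RBF sum collapses to $\sum_{i=1}^k \lambda_i \phi(\|x - x^i\|)$. For the polynomial tail, orthogonality gives $\inner{\pi^e(\alpha)}{\pi^e(x)} = \inner{\alpha}{x}$, while the constant $\alpha^0$ passes through unchanged; adding the pieces back together yields exactly $s_k(x)$. The degree-$0$ and degree-$(-1)$ cases are covered simultaneously, since by hypothesis $\alpha$ (and, when relevant, $\alpha^0$) are zero and the corresponding terms simply vanish from both sides.

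There is no real obstacle: the proposition is a formal consequence of the fact that $s_k$ is built from quantities (pairwise Euclidean distances and an inner product with the argument) that depend only on the geometry of the points, and a coordinate permutation does not alter that geometry. The one point worth flagging is the reading of each $\pi^h$ as a coordinate permutation rather than an arbitrary set-bijection of $\{0,1\}^{m_h}$; this interpretation is forced by the definition $\pi^e := I_{n_r+n_d} \times \pi$, which only makes sense as a linear operator on vectors if each $\pi^h$ permutes coordinates. Note also that the proof uses nothing about $\phi$, nor the fact that the $x^i$ lie in $\Omega_e$, nor that $\pi^h$ maps unit vectors to unit vectors: the invariance is purely algebraic.
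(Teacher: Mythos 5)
Your proof is correct and follows essentially the same route as the paper, which likewise observes that $\pi^e$ is a permutation of the components (so it preserves $\|x - x^i\|$ and $\inner{\alpha}{x}$) and concludes immediately. Your additional remark that $\pi^h$ must be read as a coordinate permutation rather than an arbitrary bijection of $\{0,1\}^{m_h}$ matches the paper's own reading (``by definition $\pi^e$ is a permutation of the components of the vector'') and is a fair clarification, not a deviation.
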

\begin{proof}
  Since by definition $\pi^e$ is a permutation of the components of the
  vector $(x_{n_r+n_d+1},\dots,\allowbreak x_{n_r+n_d+\hat{m}_{n_c}})$, and acts as the identity on the first $n_r+n_d$ components, we have:
  \begin{equation*}
   \|\pi^e(x) - \pi^e(x^i)\| = \|x - x^i\|
  \end{equation*}
  and
  \begin{equation*}
    \inner{\pi^e(\alpha)}{\pi^e(x)} = \inner{\alpha}{x}.
  \end{equation*}
  This immediately implies $s_k(x) = s'_k(\pi^e(x))$.
\end{proof}
Prop.~\ref{prop:s_k_extspace} implies that the surrogate model in
extended space is invariant to the order adopted in the unary-encoding
representation of the categorical variables. Indeed, if the solution
to \eqref{eq:phisystem} is unique, yielding a unique surrogate model
$s_k$, after permuting the unary encoding of the categorical variables
we would obtain the same surrogate model from \eqref{eq:phisystem}. We
remark that if the solution to \eqref{eq:phisystem} is not unique (see
Sect.~\ref{s:lssol}), then one may obtain a different $s_k$ after
permuting the unary encoding of the categorical variables; however,
even in this case, each solution to \eqref{eq:phisystem} has an
equivalent solution for the system obtained after permutation. Note
that similar properties do not hold when using the original space
formulation: if categorical variables are represented by integers in
the interval $[1, \dots, m_h]$, permuting these integers is not a
component-wise permutation of the vector $x$, and could in general
change the norms of $\|x - x^i\|$.

We can therefore use the extended space formulation
\eqref{eq:extspace} together with surrogate models of the form
\eqref{eq:s_k} to ensure that the sets $S_i$ are correctly treated as
unordered. However, \eqref{eq:extspace} is a constrained formulation,
whereas the algorithms of \cite{gutmann01} and
\cite{regis07stochastic} (that RBFOpt is based on) assume
unconstrained problems. In the next sections we describe one way to
deal with the constraints in \eqref{eq:extspace}. Another difficulty
is given by the fact that the constraints lead to linearly dependent
columns in the submatrix $P$ of \eqref{eq:phisystem}; this issue is
also discussed in the next sections. From now on, we define $n := n_r
+ n_d + \hat{m}_{n_c}$, i.e., the dimension of the extended space: the
interpolation model \eqref{eq:s_k} lives in $n$-dimensional
space. Note that we always require the representation of the
categorical variables in extended space to take on integer values,
as is natural. This is in contrast with popular hyperparameter
optimization approaches, where the categorical variables are often
treated as continuous for simplicity, and the fractional values are
then mapped to a valid discrete value in some heuristic way (such as
setting the variable with the largest fractional value to 1, and the
rest to 0; this is the approach implemented, e.g., in Spearmint
\cite{snoek2012practical}). From an optimization standpoint, it seems
more rigorous to treat integer variables as such, because it is
well-known that a solution to the relaxed problem could be very far
from the integer optimum, even for linear optimization problems
\cite{papadimitriou}.

Finally, to better understand the structure of the surrogate model
$s_k$ in the extended space, we rewrite it as follows. For $x \in
\R^{n_r} \times \Z^{n_d} \times \{0,1\}^{\hat{m}_{n_c}}$, define
$\text{cat}(x, h) :=
(x_{n_r+n_d+\hat{m}_{h-1}+1},\dots,x_{n_r+n_d+\hat{m}_{h}})$, i.e.,
the subvector corresponding to the unary representation of the $h$-th
categorical variable. With this definition, note that $s_k$ can be
rewritten as:
\begin{equation*}
  s_k(x) = \sum_{i = 1}^k \lambda_i \phi(\|(x_1,\dots,x_{n_r+n_d}) - (x^i_1,\dots,x^i_{n_r+n_d}) +
  \sum_{h=1}^{n_c} \mathbbm{1}_{\text{cat}(x,h) \neq \text{cat}(x^i,h)} \|) + \inner{\alpha}{x} + \alpha^0.
\end{equation*}
From the above equation we can see that the argument of the radial
basis function centered at the interpolation point $x^i$ is shifted by
the number of categorical variables that disagree with $x^i$. Thus,
for the radial basis function part of the interpolant, the surrogate
model is determined by the non-categorical variables, as well as the
number of disagreements with the categorical variables at the
interpolation nodes: the only notion of distance between categorical
variables is reduced to the binary information agreement/disagreement,
which is independent from the order assigned to the sets
$S_h$. Furthermore, depending on the degree of the polynomial tail,
there can be an additional shift of the entire surrogate model that
depends only on the values of the categorical variables (i.e., the
part corresponding to categorical variables in the inner product term
$\inner{\alpha}{x}$).

It should be noted that if a categorical variable, say the first
categorical variable for simplicity, has only two possible values,
i.e., $|S_1| = 2$, then the extended space formulation is redundant:
the constraint $x_{n_r+n_d+1} + x_{n_r+n_d+2} = 1$ implies that
$x_{n_r+n_d+2}$ is simply the complement of $x_{n_r+n_d+1}$. As will
be discussed in Sect.~\ref{s:lssol}, one among $x_{n_r+n_d+1},
x_{n_r+n_d+2}$ would always be eliminated when determining the
coefficients of the surrogate model. Hence, we use the extended space
formulation only for categorical variables that have strictly more
than two possible values: for those that have exactly two, we use the
original space formulation, mapping them to a binary variable.

\section{Description of the optimization algorithm}
\label{s:algorithm}
Many RBF-based global optimization methods use a similar scheme
that attempts to balance {\em exploration} (trying to improve a
surrogate model of the objective function in unknown parts of the
domain) with {\em exploitation} (trying to find the best objective
function value based on the current surrogate model); see, e.g.,
\cite{gutmann01,regis07stochastic,muller13somi,muller15miso,pysot}. The
algorithm that we propose is no exception, although we introduce some
additional steps (Refinement step and Restoration step, see below) as
compared to the more traditional framework. More specifically, we use
the following optimization scheme:
\begin{itemize}
\item {\bf Initial step:} Set $k$ equal to the size of the initial
  sample set. Choose $k$ affinely independent points $x^1, \dots,
  x^{k} \in \Omega_e$ using an initialization strategy.

\item {\bf Iteration step:} Repeat the following steps until $k$
  exceeds the prescribed number of function evaluations.
  \begin{enumerate}[(i)]
  \item Compute the RBF interpolant $s_k$ to the points $x^1, \dots,
    x^k$, solving \eqref{eq:phisystem}. If the system is not full
    rank, find the least squares solution. If the system cannot be
    solved, go to Restoration step.
  \item Choose a trade-off between {\em exploration} and {\em exploitation}.
  \item Determine the next point $x^{k+1}$ based on the choice at step
    (ii).
  \item Evaluate $f$ at $x^{k+1}$.
  \item Set $k \leftarrow k+1$. If the last Refinement step was
    performed sufficiently many iterations ago, go to the Refinement
    step. Otherwise, repeat the Iteration step.
  \end{enumerate}
\item {\bf Refinement step:} 
  \begin{enumerate}[(i)]
  \item Select $n+1$ points out of $x^1, \dots, x^k$ to initialize a
    local model.
  \item Apply a local search method for a specified number $k'$ of
    iterations, obtaining points $x^{k+1}, \dots, x^{k+k'}$.
  \item Set $k \leftarrow k + k'$ and go back to the Iteration step.
  \end{enumerate}
\item {\bf Restoration step:} Attempt to change the set of
  interpolation points so that \eqref{eq:phisystem} admits a
  solution. If successful, return to Iteration step. Otherwise,
  restart the algorithm.
\end{itemize}
The algorithm described above can be considered a meta-algorithm, with
many possible instantiations. The choice of the initial sample points
is discussed in \cite{nannirbfopt}; in this paper we always select
them by constructing a latin hypercube design aimed at maximizing the
minimum distance between the sample points. In the following, we
provide an overview of the main different implementations of the above
meta-algorithm available in RBFOpt. We remark that \cite{nannirbfopt}
describes several improvements to the meta-algorithm (in the context
of Gutmann's RBF method \cite{gutmann01}); all of them are used by
default in RBFOpt. Most notably, these are: automatic scaling of the
domain of the function; clipping and rescaling of the codomain;
restriction of the search box during global search --- see
\cite{nannirbfopt} for details.

\subsection{Solution of linear systems and non-unique interpolants}
\label{s:lssol}
To compute the surrogate model $s_k$ we must solve system
\eqref{eq:phisystem}. However, when the polynomial $p(x)$ is of degree
1, if some of the interpolation points are affinely dependent then
\eqref{eq:phisystem} has determinant 0. In continuous space, the
algorithm never generates affinely dependent points\footnote{To be
  precise, the algorithm only guarantees pairwise distinct points; but
  the probability of selecting a new point that is affinely spanned by
  the previous points is $0$ with the MSRSM algorithm
  \cite{regis07stochastic}, and only happens in ill-conditioned cases
  for Gutmann's algorithm \cite{gutmann01}.}. With categorical
variables, this is bound to happen: because of the constraints
$\sum_{j=\hat{m}_{h-1}}^{\hat{m}_h} x_{n_r+n_d+j} = 1$ for all
$h=1,\dots,n_c$, the binary representation of each categorical
variable in extended space adds up to the all-one vector, which is
already a column of \eqref{eq:phisystem} whenever $d=1$ in
Table~\ref{tab:rbf_degree}. To solve this issue, whenever the problem
has categorical variables and $d=1$, we eliminate the columns
$x_{n_r+n_d+\hat{m}_h}$ for $h=1,\dots,n_c$ and the corresponding rows
from \eqref{eq:phisystem}. These are precisely the last columns of
each constraint $\sum_{j=\hat{m}_{h-1}}^{\hat{m}_h} x_{n_r+n_d+j} = 1$
for all $h=1,\dots,n_c$. This is motivated by the following simple
observation.
\begin{proposition}[Reduced linear system]
  Suppose $d=1$ and $n_c \ge 1$, i.e., there is at least one
  categorical variable. Suppose further that we employ the extended
  space formulation of the problem \eqref{eq:extspace}. Denote by
  $\hat{P}$ the matrix obtained by eliminating the columns
  $x_{n_r+n_d+\hat{m}_h}$ for $h=1,\dots,n_c$ from $P$. Then if the system
  \eqref{eq:phisystem} has a solution, so does the system:
  \begin{equation}
    \label{eq:rphisystem}
  \begin{pmatrix} \Phi & \hat{P} \\ \hat{P}^{\top} & 0_{(n+1-n_c)\times
      (n+1-n_c)} \end{pmatrix} \begin{pmatrix} 
    \lambda \\ \alpha \end{pmatrix} = \begin{pmatrix} F \\ 0_{n+1-n_c} \end{pmatrix},
  \end{equation}
\end{proposition}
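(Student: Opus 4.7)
The plan is to exploit the linear dependence among the columns of $P$ that is forced by extended-space feasibility, and construct an explicit solution of the reduced system from any solution of the full one.

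First, I would note that because every interpolation point $x^i$ lies in $\Omega_e$, it satisfies $\sum_{j=\hat{m}_{h-1}+1}^{\hat{m}_h} x^i_{n_r+n_d+j} = 1$ for each $h=1,\dots,n_c$. Reading this column-wise in $P$, and letting $P_{\cdot,j}$ denote the column of $P$ associated with variable $x_j$ and $e$ the all-ones column (intercept), we get the identity
\begin{equation*}
P_{\cdot,n_r+n_d+\hat{m}_h} \;=\; e \;-\; \sum_{j=\hat{m}_{h-1}+1}^{\hat{m}_h-1} P_{\cdot,n_r+n_d+j}, \qquad h=1,\dots,n_c.
\end{equation*}
So each eliminated column is a fixed linear combination of the intercept column and the remaining categorical columns in its block; this is exactly the source of singularity in the original matrix, and also the lever for the reduction.

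Next, starting from a solution $(\lambda,\alpha)$ of \eqref{eq:phisystem}, I would define $\lambda' := \lambda$ and build $\alpha' \in \R^{n+1-n_c}$ by keeping $\alpha'_j = \alpha_j$ for all non-categorical indices $j \le n_r+n_d$, setting $\alpha'_{n_r+n_d+j} := \alpha_{n_r+n_d+j} - \alpha_{n_r+n_d+\hat{m}_h}$ for each retained categorical index $j \in \{\hat{m}_{h-1}+1,\dots,\hat{m}_h-1\}$, and setting the new intercept coefficient to $\alpha'_0 := \alpha^0 + \sum_{h=1}^{n_c}\alpha_{n_r+n_d+\hat{m}_h}$. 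Substituting the displayed column identity into $P\alpha$ and collecting terms shows $P\alpha = \hat{P}\alpha'$, which combined with $\Phi\lambda = F - P\alpha$ yields the first block of \eqref{eq:rphisystem}: $\Phi\lambda' + \hat{P}\alpha' = F$.

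For the second block, $\hat{P}^\top \lambda' = 0_{n+1-n_c}$ is immediate: $P^\top \lambda = 0$ says $\langle P_{\cdot,j},\lambda\rangle = 0$ for every column of $P$, and $\hat{P}$ is obtained from $P$ simply by deleting columns, so the inner products with the retained columns vanish a fortiori. This completes the construction.

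I do not anticipate a hard step; the only risk is notational, since there are several index ranges ($n_r, n_d, \hat{m}_h$, the intercept) to keep straight, and one must confirm that the dimensions of $\alpha'$ ($n+1-n_c$) and of the trailing zero vector in \eqref{eq:rphisystem} match after the $n_c$ column deletions. A compact way to present this rigorously is as the single observation that the map $\alpha \mapsto \alpha'$ defined above satisfies $P\alpha = \hat{P}\alpha'$, after which both equations of \eqref{eq:rphisystem} follow in one line each.
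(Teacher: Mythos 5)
Your proof is correct and follows essentially the same route as the paper: both exploit the column dependency $P_{\cdot,n_r+n_d+\hat{m}_h} = e - \sum_{j=\hat{m}_{h-1}+1}^{\hat{m}_h-1} P_{\cdot,n_r+n_d+j}$ forced by the unary constraints, and transfer a solution of \eqref{eq:phisystem} to the reduced system by redistributing the coefficients of the eliminated columns onto the retained columns and the intercept. The only difference is presentational: the paper zeroes out the eliminated coefficients one block at a time by shifting $\bar{\alpha}$ along a kernel vector of $P$, whereas your single closed-form map $\alpha \mapsto \alpha'$ with $P\alpha = \hat{P}\alpha'$ handles all $n_c$ blocks at once, which is if anything slightly cleaner.
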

\begin{proof}
  Let $\bar{\lambda}, \bar{\alpha}$ be a solution to
  \eqref{eq:phisystem}. Since $x_{n_r+n_d+\hat{m}_h} =
  1-\sum_{j=\hat{m}_{h-1}}^{\hat{m}_h-1} x_{n_r+n_d+j}$ for all
  $h=1,\dots,n_c$, we can eliminate $x_{n_r+n_d+\hat{m}_1}$ from $P$; if we define $v = (1,\dots,1,-1)^{\top} \in \R^{n+1}$ as the vector with $-1$ in the last component and $1$ in all other components, the substitution yields:
  \begin{equation*}
  \begin{pmatrix} \Phi & P \\ P^{\top} & 0_{(n+1)\times
      (n+1)} \end{pmatrix} \begin{pmatrix} \bar{\lambda}
    \\ \bar{\alpha} - \bar{\alpha}_{\hat{m}_1} v\end{pmatrix}
    = \begin{pmatrix} F \\ 0_{n+1} \end{pmatrix}.
  \end{equation*}
  This shows that $(\bar{\lambda}, \bar{\alpha} -
  \bar{\alpha}_{\hat{m}_1} v)$ is also a solution to
  \eqref{eq:phisystem}. However, by definition the
  $\hat{m}_1$-component of $\bar{\alpha} - \bar{\alpha}_{\hat{m}_1} v$
  is zero, implying that we can eliminate the column corresponding to
  $x_{n_r+n_d+\hat{m}_1}$ from $P$ (this also eliminates one row from
  $P^{\top}$, which obviously does not restrict the set of solutions
  to the system). We can repeat this process for
  $x_{n_r+n_d+\hat{m}_h}$ for $h=2,\dots,n_c$, showing that the
  reduced system admits a solution and completing the proof.
\end{proof}
By the above proposition, we can solve \eqref{eq:rphisystem} rather
than \eqref{eq:phisystem}, find a solution to the smaller system, and
extend it to a full solution by inserting zeroes in the positions
corresponding to eliminated columns. The advantage of this approach is
that \eqref{eq:rphisystem} may be an invertible system whereas
\eqref{eq:phisystem} is not invertible under the stated conditions.

Affinely dependent points affect not only the nonsingularity of the
system \eqref{eq:phisystem}, but also the unisolvence property of RBF
interpolants, i.e., uniqueness of the interpolant
\cite{powell2005five}. In particular, when $d=1$ in
Table~\ref{tab:rbf_degree}, the sufficient condition for unisolvence
--- using a basis of polynomials of degree 1 --- fails because we
eliminate one or more monomials from the polynomial basis. Thus, when
$d=1$ we can no longer guarantee the unisolvence property. However, in
practice we observe that the system \eqref{eq:phisystem} often has a
solution even when this condition fails, and sometimes a unique
solution; this was also observed in \cite{du2008radial}.

Even when using the reduced matrix $\hat{P}$, it can sometimes happen
that the algorithm generates affinely dependent interpolation
points. Specifically, this can occur when there are integer or
categorical variables, where column entries belong to a discrete set;
empirically, we observe this especially when the problem has many
binary variables. When this happens, we solve \eqref{eq:rphisystem} as
a least-squared-residuals problem. This is computationally more
expensive, but guarantees a solution. (The time spent in the solution
of linear systems is negligible in practice.)

The least squares solution to the linear system is also used whenever
there are not enough sample points to build a full interpolant, i.e.,
$k \le n + 1$. Whereas the majority of the literature assumes that at
least $n+1$ points are sampled in the initialization phase (see, e.g.,
\cite{gutmann01,regis07stochastic,nannirbfopt}), in practice this can
be a severe drawback when $n$ is large. Approaches to begin the
optimization before sampling $n+1$ points are described in
\cite{regis2013initialization,sartor2017thesis}; we follow the
approach of \cite{sartor2017thesis}\footnote{The numerical tests in
  \cite{sartor2017thesis} are based on a customized version of RBFOpt.}.
Specifically, the number of initial sample points $n_{\text{init}}$ is
heuristically chosen according to the following formula:
\begin{equation}
  \label{eq:initformula}
  n_{\text{init}} = \begin{cases}
    \rint{0.5(n+1)} & \text{if } n \le 20 \\
    \rint{0.4(n+1)} & \text{otherwise. } 
  \end{cases}
\end{equation}
If RBFOpt is executed in parallel with at least 2 threads, then the
number of initial sample points is chosen as:
\begin{equation*}
  n_{\text{init}} = \begin{cases}
    n+1 & \text{if } n \le 20 \\
    \rint{0.75(n+1)} & 21 \le n \le 50 \\
    \rint{0.5(n+1)} & \text{otherwise. } 
  \end{cases}
\end{equation*}
As long as $k \le n$, we use the least squares solution to determine
the coefficients of the surrogate model $s_k$; the rest of the
optimization algorithm remains unchanged. Whenever $k \ge n+1$ points
are available and they are affinely independent, system
\eqref{eq:phisystem} has a unique solution and we compute it using a
direct method.

The reduced matrix $\hat{P}$ is also employed in the Initial step of
the algorithm. After generating an initial sample set (see
\cite{nannirbfopt} for a description of the strategies to do so
implemented in RBFOpt), we compute a singular value decomposition of
$\hat{P}$; as long as some singular value is close to zero, we
generate a new sample set. Notice that if there are no categorical
variables then $\hat{P}$ coincides with $P$.

We remark that for all RBFs that do not have a polynomial tail of
degree 1, i.e., all except the cubic and thin plate splines, these
additional steps are not necessary. However, the cubic and thin plate
spline RBFs are empirically among the most accurate, see
e.g.~\cite{nannirbfopt}, and the automatic model selection procedure
employed by RBFOpt (see Sect.~\ref{s:modelsel}) chooses one of these
two RBFs very often in practice. Hence, the additional effort is
justified.

\subsection{Determining the next point: Iteration step}
\label{s:iteration}
We implement a variation of two algorithms for global
optimization using RBFs: Gutmann's RBF algorithm \cite{gutmann01} and
the Metric Stochastic Response Surface Method (MSRSM)
\cite{regis07stochastic}. Both algorithms proceed in cycles, and use a
parameter $\kappa$ that determines the length of an optimization
cycle.

\subsubsection{Gutmann's RBF algorithm}
A detailed description is given in \cite{nannirbfopt}; here we report
the main steps only. Let $\ell_k$ be the RBF interpolant to the points
$\{x^i : i = 1,\dots,k\} \cup \{y\}$, with function values $0, 0,
\dots, 0, 1$ respectively. Let $\mu_k(y)$ be the coefficient of $\ell_k$
corresponding to the RBF centered at $y$. Define
\begin{equation*}
  g_k(y) = (-1)^{d+1} \mu_k(y)[s_k(y) - f_k^\ast]^2, \qquad y
    \in \Omega_e \setminus\{x^1, \dots, x^k\},
\end{equation*}
where $f_k^\ast$ is a given value. Furthermore, define:
\begin{equation}
  \label{eq:h_k}
  h_k(x) = \begin{cases} \frac{1}{g_k(x)} & \text{if } x \not\in
    \{x_1,\dots,x_k\} \\
    0 & \text{otherwise}.
  \end{cases}
\end{equation}
Gutmann's RBF method then implements the following Iteration step:
\begin{itemize}
\item {\bf Iteration step} (for Gutmann's RBF algorithm):
  \begin{itemize}
  \item[(ii)] Choose a target value $f^\ast_k \in \R \cup \{-\infty\} :
    f^\ast_k \le \min_{x \in \Omega_e} s_k(x)$.
  \item[(iii)] Compute 
    \begin{equation}
      \label{eq:gutmannnext}
      x_{k+1} = \arg \max_{x \in \Omega_e} h_k(x),
    \end{equation}
    where $h(x)$ is defined as in (\ref{eq:h_k}).
  \end{itemize}
\end{itemize}

Let $y^\ast := \arg \min_{x \in \Omega_e} s_k(x)$, $f_{\min}
:= \min_{i=1,\dots,k} f(x^i)$, and $f_{\max} := \max_{i=1,\dots,k}
f(x^i)$. We employ a cyclic strategy that picks target values
$f^\ast_k\in \R \cup \{-\infty\}$ according to the
following sequence of length $\kappa + 2$:
\begin{itemize}
\item Step $-1$ ({\it InfStep}): Choose $f^\ast_k \leftarrow -\infty$.
  In this case the problem of finding $x^{k+1}$ can be rewritten as:
  \begin{equation*}
    x_{k+1} = \arg \max_{x \in \Omega_e} \frac{1}{(-1)^{d+1}\mu_k(x)}.
  \end{equation*}
  This is a pure exploration phase, yielding a point far from
  $x_1,\dots,x_k$. 
  
\item Step $\ell \in \{0,\dots,\kappa-1\}$ ({\it Global search}): Choose
  \begin{equation}
    \label{eq:targetval}
    f^\ast_k \leftarrow s_k(y^\ast) - (1 - \ell/\kappa)^2(f_{\max} -
    s_k(y^\ast)). 
  \end{equation}
  In this case, we try to strike a balance between improving model
  quality and finding the minimum.

\item Step $\kappa$ ({\it Local search}): Choose $f^\ast_k \leftarrow
  s_k(y^\ast)$. Notice that in this case \eqref{eq:h_k} is maximized
  at $y^\ast$. Hence, if $s_k(y^\ast) < f_{\min} - 10^{-10}
  |f_{\min}|$ we accept $y^\ast$ as the new sample point $x_{k+1}$
  without solving \eqref{eq:gutmannnext}. Otherwise we choose
  $f^\ast_k \leftarrow f_{\min} - 10^{-2} |f_{\min}|$. This is an
  exploitation phase.
\end{itemize}

\subsubsection{MSRSM algorithm}
\label{s:msrsm}
Define $\text{dist}(x) := \min_{i=1,\dots,k} \|x - x^i\|$. The MSRSM
algorithm implements the following Iteration step:
\begin{itemize}
\item {\bf Iteration step} (for the MSRSM algorithm):
  \begin{itemize}
  \item[(ii)] Choose a target value $\alpha \in [0,1] \cup \{\infty\}$.
  \item[(iii)] Choose a finite set of reference points $R \subset
    \Omega_e \setminus \{x^1,\dots,x^k\}$, and compute
    \begin{equation}
      \label{eq:msrsmnext}
      x_{k+1} = \arg \min_{x \in \Omega_e} \alpha \frac{\max_{y \in R} \text{dist}(y) - \text{dist}(x)  }{ \max_{y \in R } \text{dist}(y) - \min_{y \in R } \text{dist}(y)} + \frac{s_k(x) - \min_{y \in R } s_k(y)}{ \max_{y \in R } s_k(y) -  \min_{y \in R } s_k(y)}. 
    \end{equation}
  \end{itemize}
\end{itemize}
Essentially, \eqref{eq:msrsmnext} tries to solve a bi-objective
optimization problem in which the two objective functions are the
(negative of the) maximin distance from the points $x^1,\dots,x^k$,
and the value of the surrogate model. The paper
\cite{regis07stochastic} uses a variation of \eqref{eq:msrsmnext}, in
which the second fraction in the expression has weight $(1-\alpha)$
rather than $1$. RBFOpt supports this version, but by default it uses
equation \eqref{eq:msrsmnext} instead (see also \cite{nanniirrbfopt}).

The value of $\alpha$ is chosen according to a cyclic strategy of
length $\kappa + 2$ in which each step has similar goals to the
corresponding step discussed in Gutmann's RBF method. The cyclic
strategy is as follows:
\begin{itemize}
\item Step $-1$ ({\it InfStep}): Choose $\alpha \leftarrow \infty$.
  In this case the problem of finding $x^{k+1}$ can be rewritten as:
  \begin{equation*}
    x_{k+1} = \arg \max_{x \in \Omega_e} \min_{i=1,\dots,k} \|x - x^i\|.
  \end{equation*}
  This is a pure exploration phase. 
  
\item Step $\ell \in \{0,\dots,\kappa-1\}$ ({\it Global search}): Choose
  $\alpha \leftarrow \max\{1 - (\ell+1)/\kappa, 0.05\}$. This aims for
  balance between exploration and exploitation.

\item Step $\kappa$ ({\it Local search}): Choose $\alpha \leftarrow
  0$. In this case, the solution to \eqref{eq:msrsmnext} is the point
  that minimizes the surrogate model, i.e., $y^\ast = \arg \min_{y \in
    \Omega_e} s_k{y}$. If $y^\ast$ is such that $s_k(y^\ast) <
  f_{\min} - 10^{-10} |f_{\min}|$, we accept $y^\ast$ as the new point
  $x_{k+1}$. Otherwise, choose $\alpha \leftarrow 0.05$. This is an
  exploitation phase.
\end{itemize}

\subsubsection{Solution of the search problems}
\label{s:subproblems}
We implement three different approaches for the solution of the
optimization problems \eqref{eq:gutmannnext} and
\eqref{eq:msrsmnext}:
\begin{enumerate}[(1)]
\item Problems \eqref{eq:gutmannnext} and \eqref{eq:msrsmnext} are
  solved with a simple genetic algorithm, that works by generating an
  initial population $X$ uniformly at random, then iteratively
  constructing a new population by taking:
  \begin{itemize}
  \item The $0.25|X|$ best points in $X$ (surviving population),
    according to the objective function being optimized;
  \item $0.25|X|$ points obtained by repeatedly performing the
    following procedure: we randomly pick two points $x^1, x^2$ from
    the surviving population, and create a new point by choosing each
    entry from either $x^1$ or $x^2$ (mating);
  \item $0.5|X|$ points generated uniformly at random (new individuals);
  \item a point obtained by taking the best individual in $X$, and
    randomly perturbing some of its entries (mutation). The number of
    mutated entries increases as the number of iterations of the
    genetic algorithm increases.
  \end{itemize}
  We appropriately round the above quantities so that the
  size of the population $|X|$ remains constant. In the presence of
  categorical variables we sample points in the original space, where
  uniform random sampling is easily implemented, then map them to the extended
  space.
\item Rather than solving \eqref{eq:gutmannnext} and
  \eqref{eq:msrsmnext} directly, we sample a large number of points in
  $\Omega_e$ and choose the best point in the sample. This is the
  approach advocated in \cite{regis07stochastic}. In the presence of
  categorical variables we sample points in the original space, where
  uniform random sampling is easily implemented, then map them to the
  extended space.
\item Problems \eqref{eq:gutmannnext} and \eqref{eq:msrsmnext} are
  solved by means of the mathematical programming solvers Ipopt and
  Bonmin. This is the approach advocated in \cite{gutmann01}. Since
  Bonmin supports constrained problems, we work directly in the
  extended space when this approach is chosen (note that we must use
  Bonmin if discrete variables are present).
\end{enumerate}
We remark that the MSRSM scoring function requires a set of reference
points $R$, see \eqref{eq:msrsmnext}: the set of reference points is
taken to be the current population for the genetic algorithm, the
whole sample when using the sampling scheme, and $x^1,\dots,x^k$ for
when using a mathematical programming solver.

\subsection{Determining the next point: Refinement step}
As indicated at the beginning of Sect.~\ref{s:algorithm}, during the
search we periodically execute a Refinement step, with the purpose of
improving the best solution available by performing a local search
around it. The scheme employed in the Refinement step is reminiscent
of a trust region method \cite{conn2000trust,wild13rbf}. However, it
is not a trust region method, mainly because we construct a local
model using points that may be outside the trust region, and we do not
require that the model is fully-linear or a similar property
\cite{conn2009global} (although the QR-like algorithm that we use to
improve the geometry of the interpolation set would in principle yield a
fully-linear model, if it were allowed to run to completion
\cite{wild13rbf,conn2008geometry}). Furthermore, our scheme is adapted
to work on mixed-variable problems, rather than only problems with
continuous variables; proving rigorous local convergence guarantees in
the discrete setting is an involved task in itself, see
e.g.,~\cite{liuzzi2020algorithmic}, and here we limit ourselves to a
heuristic approach to refine candidate solutions. While trust region
methods enjoy strong convergence properties \cite{conn2009global},
managing the set of sample points and converging to a stationary point
can be expensive, compared to surrogate model methods, in terms of
number of objective function evaluations. Empirically, we found that
embedding a full trust region method for local search could severely
slow the global search, which is the main strength of RBF-based
surrogate model methods; hence, we opted for the methodology described
below, that is guided by two design priciples: (1) it is initialized
using information from known points only; (2) it is quickly stopped if
it fails to yield any improvement. Note that with our approach the RBF
surrogate model is still used for global and local search, but it is
complemented by a local linear model to search around the best known
solution; this is contrast to the approach recently proposed in 
\cite{eriksson2019scalable}, where the global surrogate model is
abandoned altogether, and is replaced by multiple local models managed
with a trust-region-like algorithm.

We define the following algorithmic parameters, utilized in the
algorithm.
\begin{itemize}
\item $\beta_{mr}$: minimum radius of the refinement search.
\item $\beta_{rm}$: (logarithm of the) radius multiplier for initialization.
\item $\kappa_{rs}$: threshold to shrink the refinement search radius.
\item $\kappa_{re}$: threshold to expand the refinement search radius.
\item $\kappa_{rm}$: threshold to accept the new iterate.
\item $T_{rf}$: frequency parameter of the refinement search.
\item $T_{rs}$: maximum number of consecutive refinement iterations.
\item $\epsilon_{\text{grad}}$: minimum norm of the gradient of the linear
  model.
\end{itemize}

The Refinement step works as follows:
\begin{itemize}
\item {\bf Model initialization:} Let $j \leftarrow \arg
  \min_{i=1,\dots,k} f(x^i)$. Sort the points $x^1,\dots,x^k$ by
  increasing distance from $x^j$, and select the first $n+1$ (this
  includes $x^j$ itself).  Let $S$ be the set containing these
  points. Set $\bar{x} \leftarrow x^j$.
\item Let $\hat{x}$ be the point in $S$ with the $\ceil{\frac{n+1}{2}}$
  smallest distance to $\bar{x}$. Compute the initial radius of
  the refinement search $\rho$ as:
  \begin{equation*}
    \rho = \max\{ \|\bar{x} - \hat{x}\|, \beta_{mr} \times
    2^{\beta_{rm}}\}.
  \end{equation*}
\item {\bf Refinement:} repeat a given number of times, or until a
  stopping criterion is met.
  \begin{enumerate}[(i)]
  \item Let $M$ be the matrix obtained using the points $x^i \in S$
    as columns.
  \item If $M$ does not contain $n+1$ affinely independent columns, use
    a $QR$ factorization of $M$ to replace one point in $S$ with a new
    point (obtained by moving from $\bar{x}$ in a direction taken from
    the columns of $Q$ after rescaling, with step length $\rho$) that
    increases the rank of $M$, and go back to (i).
  \item Otherwise, build a linear model $c^{\top} x + b$ of the
    objective function using points $(x^i, f(x^i)), x^i \in S$.
  \item Move from the current iterate $\bar{x}$ in the direction of
    improvement $-c$ with step length:
    \begin{equation*}
      t = \max_{0 \le t \le \rho} \{ t : \bar{x} - t c
      \in [x^{eL}, x^{e,U}]\}.
    \end{equation*}
    Let $\bar{x}' = \bar{x} - t c$ be the new candidate point.
  \item Evaluate $f(\bar{x}')$. Update the refinement search radius
    based on the expected decrease $c^{\top} (\bar{x} - \bar{x}')$ and
    the actual decrease $f(\bar{x}) - f(\bar{x}')$: if
    $\frac{f(\bar{x}) - f(\bar{x}')}{c^{\top} (\bar{x} - \bar{x}')}
    \le \kappa_{rs}$, set $\rho \leftarrow \rho/2$, if
    $\frac{f(\bar{x}) - f(\bar{x}')}{c^{\top} (\bar{x} - \bar{x}')}
    \ge \kappa_{re}$ set $\rho \leftarrow 2\rho$.
  \item If $\frac{f(\bar{x}) - f(\bar{x}')}{c^{\top} (\bar{x} -
    \bar{x}')} \ge \kappa_{rm}$ , set $\bar{x} \leftarrow
    \bar{x}'$. 
  \item Replace the point in $S$ furthest from $\bar{x}$ with the new
    point $\bar{x}'$, if it is closer, and go back to (i).
  \end{enumerate}
\end{itemize}

The Refinement step is triggered after $T_{rf}$ full cycles of the
global search strategy in the Iteration step (i.e., the strategy to
select $f^*_k$ in Gutmann's RBF method, or $\alpha$ in MSRSM), but
only if one of the following two conditions apply: (i) a better
solution was discovered since the last execution of the Refinement
step, or (ii) the last Refinement step was stopped because of its
iteration limit (parameter $T_{rs}$, see below), rather than for lack
of improvement.

When the Refinement step ends, all points at which $f$ has been
evaluated are added to $x^1,\dots,x^k$, and the algorithm goes back to
the Iteration step. The Refinement step ends when one of the following
conditions is verified:
\begin{itemize}
\item after $T_{rs}$ consecutive iterations, unless we are close to
  hitting the limit on the maximum number of objective function
  evaluations, or the CPU time limit (this is defined by a further
  parameter);
\item if the radius $\rho$ of the refinement search drops below
  $\beta_{mr}$;
\item if the norm of the gradient of the linear model drops below
  $\epsilon_{\text{grad}}$.
\end{itemize}

The above scheme is designed with continuous variables in mind, but we
heuristically apply the Refinement step also in the presence of
integer or categorical variables. When the problem has integer or
categorical variables, the Refinement step proceeds as described
above, but every candidate point is rounded to an integer point before
being evaluated with $f$. In particular, every integer variable that
takes on a fractional value in the candidate point, say $\bar{x}_j$,
is rounded down with probability $\ceil{\bar{x}_j} - \bar{x}_j$, and
rounded up with probability $\bar{x}_j - \floor{\bar{x}_j}$; whereas
every unary representation of a categorical variable, say
$(\bar{z}_1,\dots,\bar{z}_{m_h})$ such that $\sum_{j=1}^{m_h}
\bar{z}_j = 1$ in extended space, is rounded to the orthonormal basis
vector $e_i$ with probability $\bar{z}_{i} / \sum_{j=1}^{m_h}
\bar{z}_j$ for all $i=1,\dots,m_h$. The rounding process for integer
and categorical variable is repeated a given number times, and the
point with the best linear model score is chosen as the next
candidate. A similar procedure is applied in step (ii) to the column
of $Q$ that is about to replace one column in $M$: each entry is
projected to the closest feasible vector in extended space, using
$\ell_1$-norm distance.

\subsection{Repairing numerical errors: Restoration step}
Whenever numerical errors are detected in the solution of the linear
system \eqref{eq:phisystem}, we switch to a Restoration step that
works as follows. Given the list of interpolation points
$x^1,\dots,x^k$, for $i=k,k-1,\dots,1$ we heuristically solve the
problem:
\begin{equation*}
  \max_{x \in \Omega_e} \min_{j = 1,\dots,k, j \neq i} \|x - x^j\|,
\end{equation*}
then temporarily replace $x^i$ with the solution to the above problem,
say $\bar{x}$. If the system \eqref{eq:rphisystem} for the points
$x^1,\dots,x^{i-1},\bar{x},x^{i+1},\dots,x^k$ is invertible, we
permanently replace $x^i$ with $\bar{x}$, and the Restoration step is
successful. Otherwise, we reinstate $x^i$ and continue the Restoration
step by decreasing $i$. We remark that several interpolation points may be
added in between successive solutions of \eqref{eq:phisystem}, because
the Refinement step may perform multiple iterations and it does not
recompute the interpolant $s_k$. For this reason, we cannot hope that
removing the last interpolation node is always sufficient to fix
numerical errors.

The rationale for solving a maxmin distance problem when trying to
improve the numerics is that proximity to other interpolation points
necessarily leads to an ill-conditioned linear system: if two points
are very close to each other, the corresponding rows in
\eqref{eq:phisystem} are almost identical. This suggests maximizing
the distance from other interpolation points as the main criterion for
choosing a point. Furthermore, this criterion corresponds to the
``pure exploration'' phase of the MSRSM algorithm, trying to gather
information in unexplored parts of the search space; hence, it
naturally fits into our optimization scheme.

\subsection{Automatic model selection}
\label{s:modelsel}
In order to dynamically choose the surrogate model that appears to be
the most accurate for the problem at hand, we assesses model quality
using a cross validation scheme. This was introduced in
\cite{nannirbfopt}: here we give a brief summary of the main ideas,
and report some additional implementation details introduced
subsequently.

Suppose we have $k$ interpolation points $x^1,\dots,x^k$ with
surrogate model $s_k$. We assume that the points are sorted by
increasing function value: $f(x^1) \le f(x^2) \le \dots \le f(x^k)$;
this is without loss of generality as we can always rearrange the
points. We perform cross validation as follows. For
$j\in\{1,\dots,k\}$, we fit a surrogate model $\tilde{s}_{k,j}$ to
the points $(x^i, f(x^i))$ for $i=1,\dots,k, i\neq j$ and evaluate the
performance of $\tilde{s}_{k,j}$ at $(x^j, f(x^j))$. We use an
order-based measure to evaluate performance of the surrogate
model. For a given scalar $y$, let $\text{order}_{k,j}(y)$ be the
position at which $y$ should be inserted in the ordered list $f(x^1)
\le \dots \le f(x^{j-1}) \le f(x^{j+1}) \le \dots \le f(x^k)$ to keep
it sorted. Since $\text{order}_{k,j}(f(x^j)) = j$, we use the value
$q_{k,j} = |\text{order}_{k,j}(\tilde{s}_{k,j}(x^j)) - j|$ to assess
the predictive power of the model. We then average $q_{k,j}$ with $j$
ranging over some subset of $\{1,\dots,k\}$ to compute a model quality
score. This approach is a variation of leave-one-out cross validation
in which we look at how the surrogate model ranks the left-out point
compared to the other points, rather than evaluating the accuracy of the
prediction in absolute terms. This is motivated by the observation
that for the purpose of optimization, a surrogate model that ranks all
points correctly is arguably more useful than a surrogate model that
attains small absolute errors, but is not able to predict how points
compare to each other \cite{audet2018order}.

We perform model selection at the beginning of every cycle of the
search strategy to select $f^\ast_k$ or $\alpha$ (depending on the
choice of algorithm: Gutmann's RBF or MSRSM). Our aim is to select the
RBF model with the best predictive power. We choose two different
models: one for local search, one for global search, corresponding to
different Iteration steps of the algorithm. We do this by computing
the average value $\bar{q}_{10\%}$ of $q_{k,j}$ for
$j=1,\dots,\floor{0.1k}$, and the average value $\bar{q}_{70\%}$ of
$q_{k,j}$ for $j=1,\dots,\floor{0.7k}$. 

The RBF model with the lowest value of $\bar{q}_{10\%}$ is employed in
the subsequent optimization cycle for the {\it Local search} step and
the {\it Global search} step with $h= \kappa-1$, while the RBF model
with lowest value of $\bar{q}_{70\%}$ is employed for all the
remaining steps. We consider all RBFs listed in Table
\ref{tab:rbf_degree}. This implies that the type of RBF 
dynamically changes during the course of the optimization.

In \cite{nannirbfopt}, we show that the values $\bar{q}_{10\%},
\bar{q}_{70\%}$ can be computed in time $O(m^3)$, where $m$ is the
number of rows of \eqref{eq:phisystem} (i.e., $m = k + n + 1$ for
cubic and thin plate spline RBF, $m = k + 1$ for linear and
multiquadric, $m = k$ for Gaussian). This is achieved by reusing the
same LU factorization of the system \eqref{eq:phisystem} for each
iteration of the cross validation routine. Details of this approach
are given in \cite{nannirbfopt}.

When automatic model selection is enabled, we build the surrogate
model using thin plate splines until there are enough points to start
the automatic model selection procedure. Furthermore, after $T_{mcv}$
executions of the automatic model selection procedure, where $T_{mcv}$
is a parameter, we trust the results obtained up to that point and use
the type of RBFs that gave the smallest error the largest number of
times. Results for the local search model and global search model are
kept separate. In other words, the quantities $\bar{q}_{10\%}$ and
$\bar{q}_{70\%}$ are computed at most $T_{mcv}$ times; after that, we
always use the RBF type that gave the smallest value of
$\bar{q}_{10\%}$ the largest number of times out of $T_{mcv}$ for
local search, and similarly with $\bar{q}_{70\%}$ for global
search. This can lead to large time savings on problems with several
thousand interpolation points, as leave-one-out can become expensive
if it has to perform thousands of iterations with a large system
\eqref{eq:phisystem}.

\subsection{Parallel optimizer}
Our implementation supports asynchronous parallel evaluation of the
objective function $f$, which is assumed to be the most time-consuming
part of the optimization process. The parallel optimization algorithm
is nondeterministic due to its asynchronous nature. This algorithm was
first introduced in \cite{nanninnrbfopt}; here we give a brief
overview, as well as several implementation details that were not
present in the version of \cite{nanninnrbfopt}.

The parallel optimizer works by creating a set of worker threads,
coordinated by a master. The worker threads perform tasks of two
types: Type 1 is the evaluation of the objective function at a given
point (which is assumed to be time-consuming), Type 2 is the
computation of a point at which the objective function should be
evaluated (which usually takes only a fraction of a second, but may
take longer especially when some subproblems are solved with
Bonmin). We always dedicate one worker to perform tasks of Type 1 or
of Type 2 related to the Refinement step; however, there is a global
limit to the fraction of Refinement steps that can be performed as
compared to the total number of iterations. The remaining workers are
utilized for the Iteration step. As long as there are available
processors, the master removes a task from the queue of active tasks,
and assigns it to a worker. Tasks of Type 1 have priority over Type 2,
due to their longer execution times. Within tasks of the same type, a
first come, first served policy is used.

Recall that to compute the surrogate model $s_k$ we need pairwise
distinct points (possibly affinely independent, depending on the
degree of the polynomial tail). To ensure that the same point is not
evaluated twice in parallel, whenever a task of Type 2 is completed,
by determining a point $x^{k+1}$ at which $f$ should be evaluated
next, we add a temporary interpolation point at $x^{k+1}$, with value
$\max\{\min_{i=1,\dots,k} \{f(x^i)\}, s_k(x^{k+1})\}$. This point is
converted to a regular interpolation point when the corresponding
objective function evaluation (task of Type 1) is complete, and it is
assigned its true function value $f(x^{k+1})$.

For the Refinement step, a new sample set for the linear model is
computed from scratch every time that a point with better objective
function value is discovered outside the Refinement step; this is
different from the serial optimization algorithm, where the Refinement
step is executed in consecutive iterations and no such event can
occur. Another major difference in the parallel optimizer is that we
do not perform the Restoration step: when \eqref{eq:phisystem} cannot
be solved and the queue of active tasks is empty, we restart the
algorithm. (If the queue is not empty, the algorithm keeps processing
tasks that have finished until \eqref{eq:phisystem} can be solved, or
the queue is empty.) The choice to restart, rather than attempt a
Restoration step, has several motivations. The main reason is that
removing an interpolation node requires synchronizing all threads to
ensure exclusive access to the relevant data structures; because
function evaluations can be very time-consuming, this may leave
several threads idle for a long time. Another reason is that, since
multiple point evaluations are performed in parallel, it is possible
that other points in the queue lead to numerical instability: the
Restoration step would have to eliminate all of them, potentially
provoking a prolonged period of inefficient CPU use. Finally, the
Restoration step is not guaranteed to work unless we allow removing
multiple points from the set $\{x^1,\dots,x^k\}$\footnote{This is also the case
  for serial (i.e., non parallel) optimization, but in the serial case
  removing the most recent interpolation point yields an invertible
  system \eqref{eq:phisystem} in all cases except when multiple points
  are added in consecutive Refinement steps.}; however, choosing a subset of
points to remove is a difficult combinatorial problem, hence we opt
for a simpler approach.

\section{Computational experiments}
\label{s:exp}
RBFOpt is implemented in Python and available on GitHub; it can be
automatically installed from PyPI using {\tt pip}. In this section we
evaluate the computational performance of the solver, with a focus on
testing the features of the optimization algorithm described in
Sect.~\ref{s:algorithm}. All experiments are run on identical virtual
machines with (virtual) Intel Xeon E5-2683 v4 CPUs, clocked at 2.10GHz
and running Linux; these machines are instantiated on an IBM cloud.
We use Ipopt \cite{ipopt} and Bonmin \cite{bonmin} to solve all
auxiliary subproblems that require a mathematical programming solver
(Bonmin is used only if the subproblem has integer variables). Note
that these subproblems are not necessarily solved to optimality, as
they are generally nonconvex (e.g., minimizing the surrogate model
$s_k$ or solving \eqref{eq:gutmannnext}); we put a time limit of 20
seconds on each execution of the solvers, and Bonmin is configured
with the ``B-BB'' algorithm.

\subsection{Test instances}
We test the algorithm on a set of 54 instances, with the following
characteristics:
\begin{itemize}
\item 22 instances have continuous variables only, but no integer or
  categorical variables;
\item 20 instances have integer variables only, or continuous and integer variables, but no categorical variables;
\item 12 instances have any combination of variable types and have at
  least one categorical variable.
\end{itemize}
In Table \ref{tab:instances} we give details on the number of
variables and the source of each problem. All these problems are
highly nonconvex, and their dimension is relatively small. The
instances with categorical variables are obtained by modifying other
problem instances, easily identified by their names. The categorical
variables determine one or both of the following: (1) they modify some
of the problem's data, i.e., vectors of coefficients that appear in
the cost function; (2) they modify some of the functions involved in
the expression for the objective function, although they do not modify
their arguments (e.g., the objective function contains an expression
$g(x_1 + 2 x_2)$, and one of the categorical variables determines what
function $g$ is used among a finite set). All categorical variables in
our set of test problems have at least three possible values, since,
as already mentioned, categorical variables with only two possible
values are modeled as binary variables.

\begin{table}[tb]
% table caption is above the table
\caption{Details of the instances used for the tests. Legend for the sources: S1 is Dixon-Szeg\"{o} \cite{dixon2}, S2 is the original GLOBALLIB, S3 is the MINLPLib 2 \cite{minlplib2}, S4 is \cite{schaffer1986some}, S5 is Neumaier's website \cite{neumaier}, S6 denotes Schoen's smooth functions \cite{tessht}. The functions with no indicated source are discussed in the main text.}

\label{tab:instances}       % Give a unique label
% For LaTeX tables use
\centering
\scriptsize
\begin{tabular}{|l|l|l|l|l||l|l|l|l|l|}
\hline
Instance  & \multicolumn{3}{c|}{\# variables} & Source & Instance  & \multicolumn{3}{c|}{\# variables} & Source  \\
\cline{2-4} \cline{7-9}
& Cont.\ & Int.\ & Cat.\ &  & & Cont.\ & Int.\ & Cat.\ & \\
\hline
branin & 2 & 0 & 0 & S1 &                    nvs06 & 0 & 2 & 0 &  S3 \\                       
camel & 2 & 0 & 0 & S1 &                     nvs07 & 0 & 3 & 0 &  S3 \\                       
ex4\_1\_1 & 1 & 0 & 0 & S2 &                 nvs09 & 0 & 10 & 0 &  S3 \\                      
ex4\_1\_2 & 1 & 0 & 0 & S2 &                 nvs14 & 0 & 5 & 0 &  S3 \\                       
ex8\_1\_1 & 2 & 0 & 0 & S2 &                 nvs15 & 0 & 3 & 0 &  S3 \\                       
ex8\_1\_4 & 2 & 0 & 0 & S2 &                 nvs16 & 0 & 2 & 0 &  S3 \\                       
goldsteinprice & 2 & 0 & 0 &  S1 &           prob03 & 0 & 2 & 0 & S3 \\                       
hartman3 & 3 & 0 & 0 & S1 &                  schoen\_6\_1\_int & 2 & 4 & 0 & S6 \\            
hartman6 & 6 & 0 & 0 &  S1 &                 schoen\_6\_2\_int & 2 & 4 & 0 & S6 \\            
least & 3 & 0 & 0 & S2 &                     schoen\_10\_1\_int & 4 & 6 & 0 & S6 \\           
perm\_6 & 6 & 0 & 0 & S5 &                   schoen\_10\_2\_int & 4 & 6 & 0 & S6 \\           
perm0\_8 & 8 & 0 & 0 & S5  &                 sporttournament06 & 0 & 15 & 0 & S3 \\           
rbrock & 2 & 0 & 0 & S2  &                   st\_miqp1 & 0 & 5 & 0 & S3 \\                    
schaeffer\_f7\_12\_1 & 12 & 0 & 0 & S4 &     st\_miqp3 & 0 & 2 & 0 & S3 \\                    
schaeffer\_f7\_12\_2 & 12 & 0 & 0 & S4  &    st\_test1 & 0 & 5 & 0 & S3 \\                    
schoen\_6\_1 & 6 & 0 & 0 & S6 &              branin\_cat & 2 & 0 & 1 & -- \\                  
schoen\_6\_2  & 6 & 0 & 0 & S6 &             ex8\_1\_1\_cat & 2 & 0 & 2 & --\\                
schoen\_10\_1 & 10 & 0 & 0 & S6 &            hartman3\_cat & 3 & 0 & 1 & --\\                 
schoen\_10\_2 & 10 & 0 & 0 & S6 &            hartman6\_cat & 6 & 0 & 1 & --\\                 
shekel10 & 4 & 0 & 0 &  S1 &                 schoen\_10\_1\_cat & 10 & 0 & 2 & --\\           
shekel5 & 4 & 0 & 0 &  S1 &                  schoen\_10\_2\_cat & 10 & 0 & 2 & --\\           
shekel7 & 4 & 0 & 0 &  S1 &                  gear4\_cat & 1 & 4 & 1 & --\\                    
gear & 0 & 4 & 0 & S3 &                      nvs07\_cat & 0 & 3 & 1 & --\\                    
gear4 & 1 & 4 & 0 & S3 &                     nvs09\_cat & 0 & 10 & 1 & --\\                   
nvs02 & 0 & 5 & 0 &  S3 &                    st\_miqp1\_cat & 0 & 5 & 1 & --\\                
nvs03 & 0 & 2 & 0 &  S3 &                    schaeffer\_f7\_12\_1\_int\_cat & 9 & 3 & 1 & --\\
nvs04 & 0 & 2 & 0 &  S3 &                    schaeffer\_f7\_12\_2\_int\_cat & 9 & 3 & 1 & --\\
\hline
\end{tabular}    
\end{table}

We also use a randomized procedure to create larger instances starting
from a base instance, multiplying its dimension by a given positive
integer number. We now give a high-level description of this
procedure; full details can be found in the publicly available source
code, as a precise description is tedious and does not add further
insight. Let $n$ be the number of variables of the base instance with
objective function $f$, and $s$ the size multiplier. The objective
function of the enlarged instance is:
\begin{equation*}
  \sum_{i=1}^s c_i f(x_{(i-1)s + 1}, \dots, x_{(i)s}) + c_{s+1} f(\ell_1(\sum_{j \in R_1} a_{1j} x_j), \dots, \ell_n(\sum_{j \in R_n} a_{nj} x_j)),
\end{equation*}
where $R_1,\dots,R_n$ is a partition of the set $\{1,\dots,sn\}$, the
coefficients $c_i, a_{ij}$ are randomly chosen within a specified
range, $c_i > 0$ for all $i=1,\dots,s+1$, $\sum_{i=1}^{s+1} c_i = 1$,
and the $\ell_i$ are affine functions that map their argument to the
original domain of $f$. In other words, the enlarged objective
function is the sum of several copies of $f$ defined on disjoint sets
of variables, with a copy of $f$ that acts on linear combinations of
all the variables.  By construction, the value of the optimum stays
the same as in the base instance. We finally permute all the variables
in the enlarged instance. Notice that we do not change the variable
type; e.g., if the base instance has $3$ continuous variables and $2$
categorical variables, using dimension multiplier $s=2$ yields an
instance with $6$ continuous variables and $4$ categorical
variables. From an empirical evaluation, the enlarged instances are
much more difficult than the base instances; this is likely due to the
final copy of $f$ that acts on linear combinations of variables, thus
creating interactions between decision variables that may not have
been present in the original instance. Our final test set consists of
all instances listed in Table \ref{tab:instances}, plus all instances
obtained with the above procedure with a size multiplier $s=2$. This
yields 108 problem instances, with a number of variables varying from
1 to 30.

\subsection{Comparison of algorithmic variants}
\label{s:variants}

To compare algorithmic variants of RBFOpt, we plot performance and data
profiles \cite{morewild09}, which are defined as follows. Define the
{\em budget} for an algorithm as the maximum number of function
evaluations allowed. Unless specified otherwise, in our experiments
the budget is set to $50(n + 1)$. For a given instance and a set of
algorithms ${\cal A}$, let $f^*$ be the best function value discovered
by any algorithm, and $x_0$ the first point evaluated by each
algorithm, which we impose to be the same. Let $0 < \tau < 1$ be a
tolerance. We say that an algorithm {\em solves} an instance up to
tolerance $\tau$ if it returns a point $\bar{x}$ such that:
\begin{equation}
  \label{eq:convergence}
  f(x_0) - f(\bar{x}) \ge (1 - \tau)(f(x_0) - f^*),
\end{equation}
and the algorithm {\em fails} otherwise. In other words, the algorithm
has to close at least $1-\tau$ of the gap between the initial point
and the best point found by any algorithm.

Let ${\cal P}$ be the set of problem instances in the test set. Let
$t_{p,a}$ be the number of function evaluations required by algorithm
$a$ to solve problem $p$ ($t_{p,a} = \infty$ if algorithm $a$ fails on
problem $p$ according to the convergence criterion
\eqref{eq:convergence}), and $n_p$ the number of variables of problem
$p$. The {\em data profile} for an algorithm $a$ is the fraction of
problems that are solved within budget $\alpha(n_p +1)$, defined as:
\begin{equation*}
  d_a(\alpha) := \frac{1}{|{\cal P}|} \left| \left\{ p \in {\cal
    P} : \frac{t_{p,a}}{n_p + 1} \le \alpha \right\} \right|.
\end{equation*}
The performance ratio of algorithm $a$ on problem $p$ is defined as:
\begin{equation*}
  r_{p,a} :=  \frac{t_{p,a}}{\min\{t_{p,a} : a \in {\cal A}\}}.
\end{equation*}
According to this definition, the performance ratio is $1$ for the
best performing algorithm on a problem instance. The performance
profile of algorithm $a$ is defined as the fraction of problems where
the performance ratio is at most $\alpha$, defined as:
\begin{equation*}
  p_a(\alpha) := \frac{1}{|{\cal P}|} \left| \left\{ p \in {\cal
    P} : r_{p,a} \le \alpha \right\} \right|.
\end{equation*}

\begin{figure}[tb]
  \centering
  \subcaptionbox{Performance profile, $\tau = 10^{-2}$ \label{fig:algvariants_1}}{
    \includegraphics[width=0.48\textwidth]{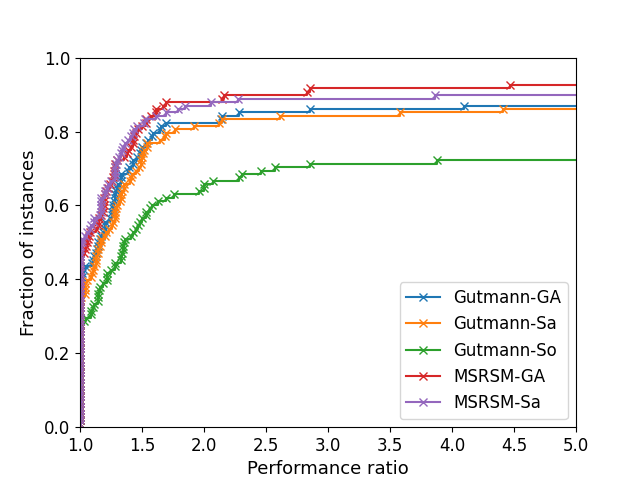}}
  \subcaptionbox{Data profile, $\tau = 10^{-2}$    \label{fig:algvariants_2}}{
    \includegraphics[width=0.48\textwidth]{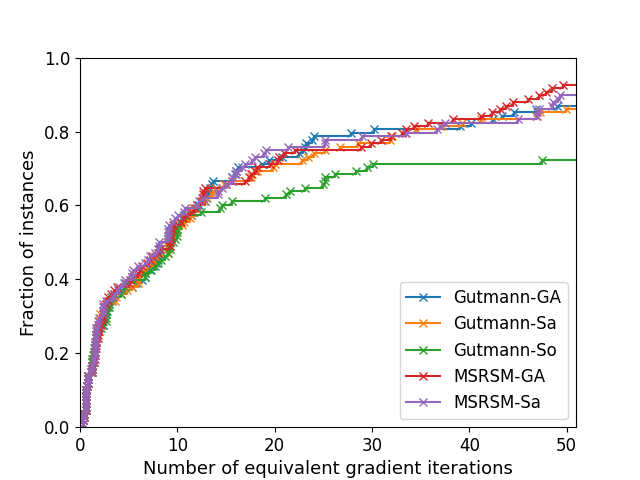}
  }\\
  \subcaptionbox{Performance profile, $\tau = 10^{-4}$    \label{fig:algvariants_3}}{
    \includegraphics[width=0.48\textwidth]{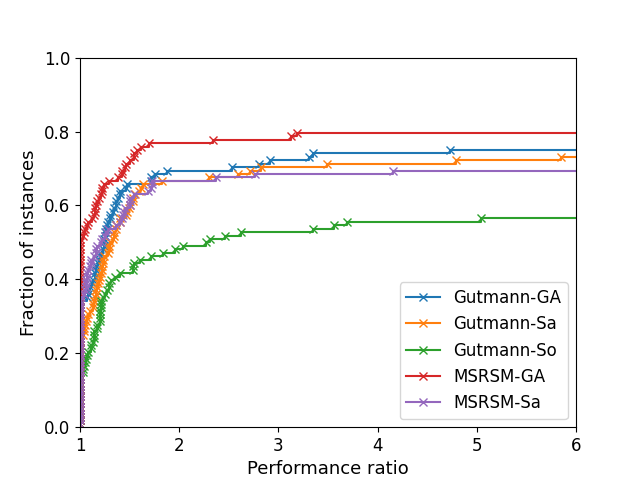}
  }
  \subcaptionbox{Data profile, $\tau = 10^{-4}$    \label{fig:algvariants_4}
}{
    \includegraphics[width=0.48\textwidth]{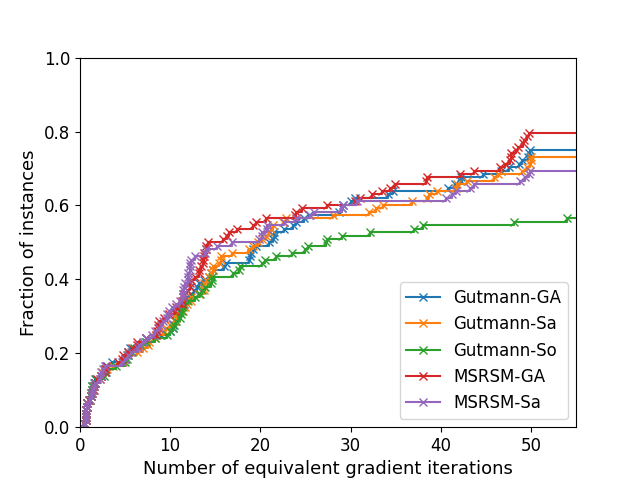}
  }
  
  \caption{Performance profiles (left) and data profiles (right) for different Iteration step procedures. Legend: GA = genetic algorithm, Sa = sampling method, So = mathematical optimization solver.}
  \label{fig:algvariants}
  
\end{figure}

For each of the 108 problem instances, we test 20 different random
seeds. All tested variants are given the same sequence of random
seeds. We remark that if two variants of the algorithm use the same
number of points in the initialization phase and have the same random
seed, then they will generate exactly the same initial sample set. For
every instance, we aggregate the 20 different random seeds by taking
the median objective function value at every iteration; the
performance and data profiles are constructed using the aggregate
data. In this section we use the serial version of the optimization
algorithm.

In our first set of experiments we compare the two methodologies for
the Iteration step discussed in Sect.~\ref{s:iteration} (i.e.,
Gutmann's method and MSRSM), combined with the three approaches to
solve the resulting subproblems discussed in
Sect.~\ref{s:subproblems}: the genetic algorithm, the sampling method,
and the mathematical optimization solver. We remark that the
mathematical optimization solvers are relatively slow, taking up to 20
seconds per solve on the more difficult problems, whereas the genetic
algorithm and the sampling method only require a fraction of a second
due to their heuristic nature. For this set of experiments only, we
parametrize the genetic algorithm and the sampling method in a
search-intensive fashion, increasing the number of sampled points and
the number of iterations of the genetic algorithm compared to their
default values (the genetic algorithm uses a base population size of
$5000+n/5$ points and performs $40$ iterations, compared to a default
of $400 + n/5$ and $20$ iterations, while the sampling algorithm
samples $3000n$ points, compared to a default of $1000n$).  All the
other parameters for the algorithm are left to their default
values. We also remark that the implementation of the MSRSM method
with mathematical optimization solvers is not competitive with the
other variants, because the solution of \eqref{eq:msrsmnext} with a
solver for convex problems is essentially hopeless: the expression of
the maxmin distance is highly nonconvex and solvers have a very high
chance of getting trapped in poor local minima. Hence, we only report
results for five algorithmic variants: all combinations of Gutmann's
method and MSRSM with the genetic algorithm, the sampling approach,
and the mathematical optimization solver approach, minus the
combination MSRSM + mathematical optimization solver. Results are
plotted in Fig.~\ref{fig:algvariants}. The plots quite convincingly
show that the genetic algorithm is overall the best choice, with both
Gutmann's method and MSRSM. The sampling method has a similar
performance, while using the mathematical optimization solver is
considerably worse. We attribute this to the fact that the subproblems
involved in the Iteration step are hard nonconvex problems, and the
solvers are likely to struggle. (We remark that the minimization of
the surrogate model is always performed using Ipopt or Bonmin,
regardless of the methodology used to solve subproblems in the
Iteration step.)  The difference between Gutmann's method and MSRSM is
small, but MSRSM emerges as the winner by a small margin. In the
following, we use MSRSM with the genetic algorithm as the default
settings. An important conclusion of our numerical study is the fact
that neither method dominates the other: while MSRSM seems slightly
better and enjoys the benefit of being conceptually simpler, our plots
indicate that Gutmann's method is competitive.

\begin{figure}[tb]
  \centering
  \subcaptionbox{Performance profile, $\tau = 10^{-2}$     \label{fig:init_1}}{
    \includegraphics[width=0.48\textwidth]{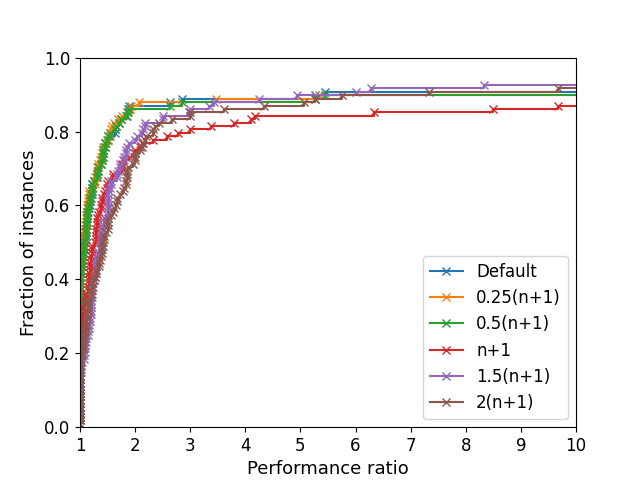}
  }
  \subcaptionbox{Data profile, $\tau = 10^{-2}$    \label{fig:init_2}}{
    \includegraphics[width=0.48\textwidth]{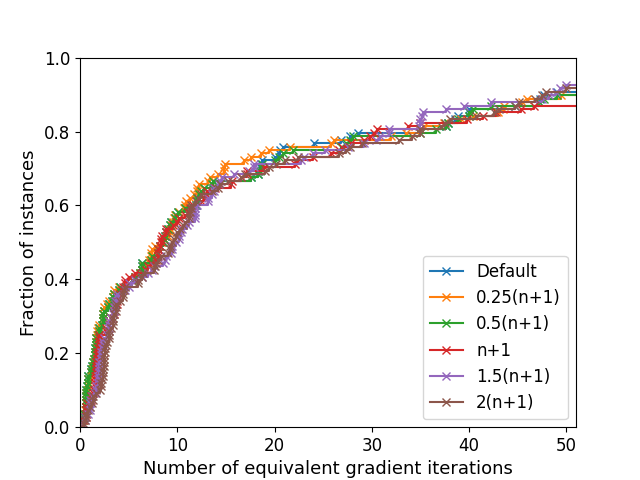}
  }\\
  \subcaptionbox{Performance profile, $\tau = 10^{-4}$    \label{fig:init_3}}{
    \includegraphics[width=0.48\textwidth]{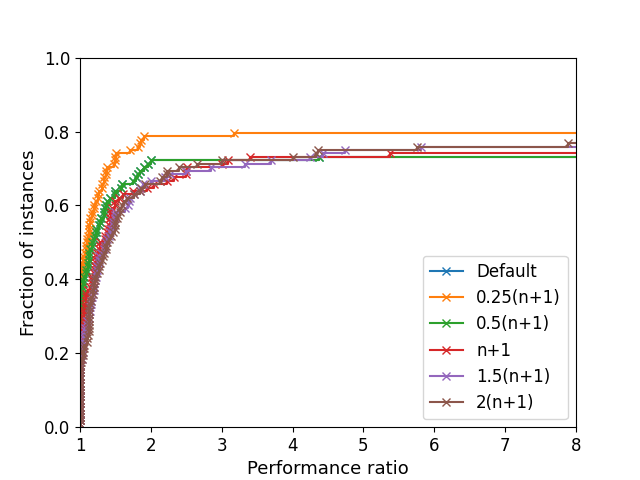}
  }
  \subcaptionbox{Data profile, $\tau = 10^{-4}$    \label{fig:init_4}}{
    \includegraphics[width=0.48\textwidth]{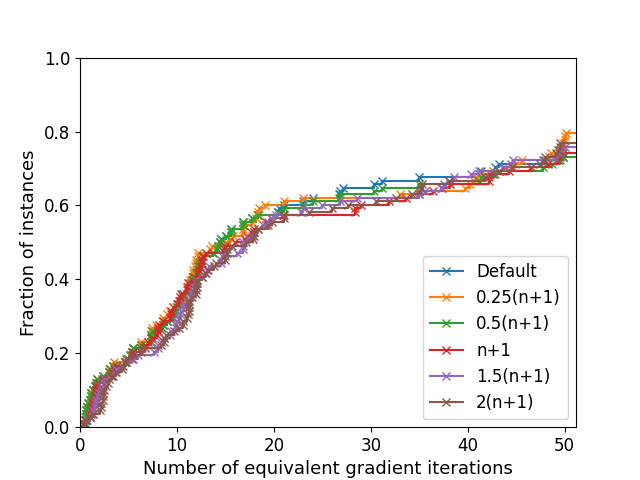}
  }
  
  \caption{Performance profiles (left) and data profiles (right) using a different number of sample points in the initialization procedure.}
  \label{fig:init}
  
\end{figure}

In the second set of experiments we analyze the impact of the number
of sample points for the Initialization step of the algorithm. As
discussed in Sect.~\ref{s:lssol}, we allow building a surrogate model
with less than $n+1$ points, in which case the system
\eqref{eq:phisystem} may have multiple solutions. In
Fig.~\ref{fig:init} we report results when using $0.25(n+1), 0.5(n+1),
n+1, 1.5(n+1), 2(n+1)$ sample points to initialize $s_k$, as well as
the number of points defined in \eqref{eq:initformula}, labeled
``Default'' in the plots. We can see that $0.25(n+1), 0.5(n+1)$, and
``Default'' have the best performance with $\tau = 10^{-2}$, and there
is no winner among these three. For $\tau = 10^{-4}$, the curve for
``Default'' is not visible in Fig.~\ref{fig:init_3} because it is
hidden behind the curve for $0.5(n+1)$: this is expected, since by
equation \eqref{eq:initformula}, ``Default'' uses $0.5(n+1)$ on most
problem instances. Choosing $0.25(n+1)$ emerges as the winner in these
tests, followed by $0.5(n+1)$ and ``Default'', which are
indistinguishable on this set of test problems. The motivation for the
``Default'' setting, which seems slightly inferior to $0.25(n+1)$ in
these tests, is robustness: using a very small number of sample points
can increase the variance of the algorithm, hence we prefer the safer
setting.

\begin{figure}[htb]
  \centering
  \subcaptionbox{Performance profile, $\tau = 10^{-2}$    \label{fig:refinement_1}}{
    \includegraphics[width=0.48\textwidth]{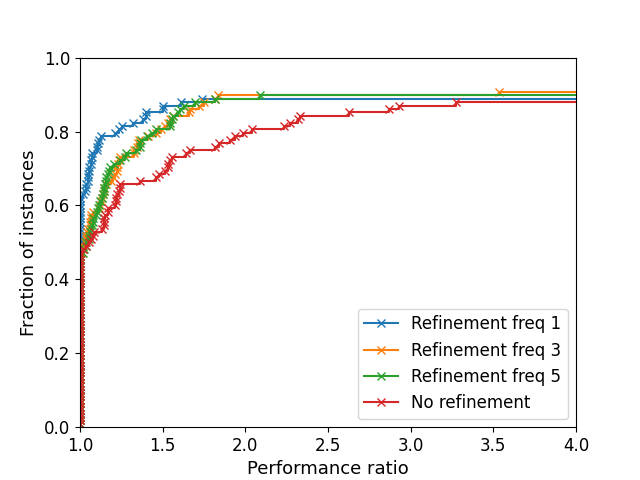}
  }
  \subcaptionbox{Data profile, $\tau = 10^{-2}$    \label{fig:refinement_2}}{
    \includegraphics[width=0.48\textwidth]{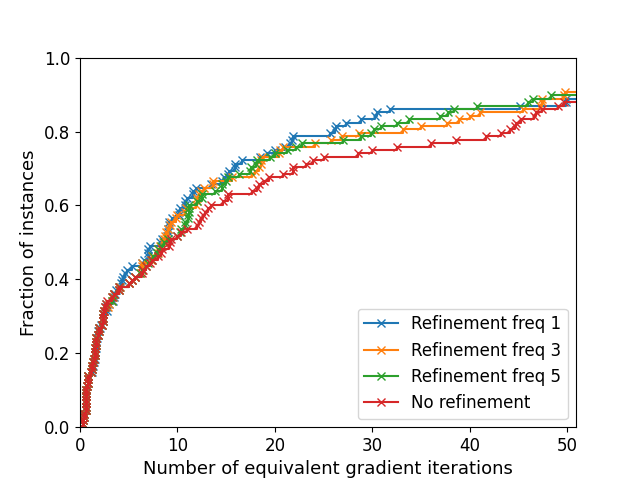}
  }\\
  \subcaptionbox{Performance profile, $\tau = 10^{-4}$    \label{fig:refinement_3}}{
    \includegraphics[width=0.48\textwidth]{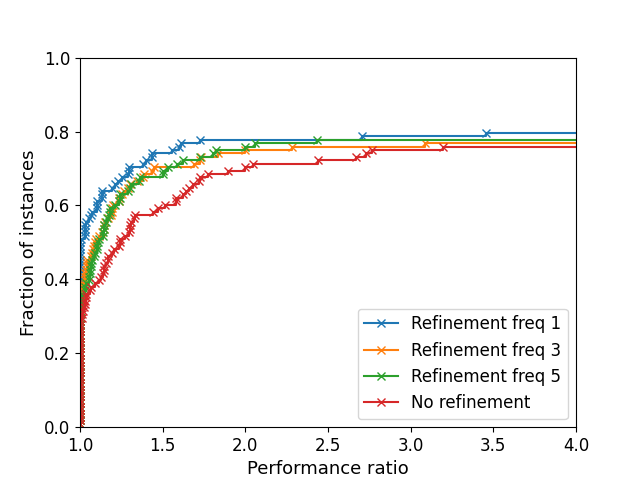}
  }
  \subcaptionbox{Data profile, $\tau = 10^{-4}$    \label{fig:refinement_4}}{
    \includegraphics[width=0.48\textwidth]{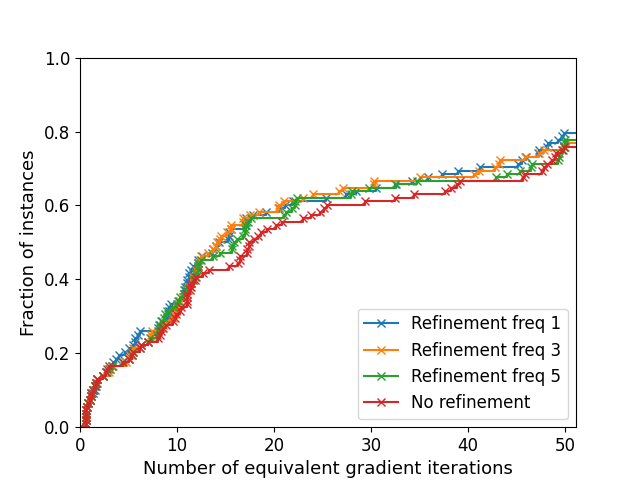}
  }
  
  \caption{Performance profiles (left) and data profiles (right) with and without the Refinement step. The ``Frequency'' of the Refinement step indicates after how many Iteration steps it is performed.}
  \label{fig:refinement}
  
\end{figure}

In the last set of experiments for this section, we look at the impact
of the Refinement step. Plots are reported in
Fig.~\ref{fig:refinement}. We compare RBFOpt without the Refinement
step, with three versions of the algorithm that employ the Refinement
step at different frequencies. Here the results are very clear: the
Refinement step significantly improves the performance of the
algorithm on this set of instances. The plots suggest that running the
Refinement step as frequently as possible is a good idea. A
head-to-head comparison\footnote{It is known that performance profiles
  depend on the entire set of algorithms evaluated; therefore,
  pairwise comparisons can sometimes yield useful information.}
between the algorithm with Refinement step frequencies of 1 and 3
reveals that the difference is quite small and not as one-sided as it
would appear from Fig.~\ref{fig:refinement}, see Fig.~\ref{fig:refhh}
(results for $\tau=10^{-3}$ are essentially identical to those for
$\tau=10^{-4}$). We set the Refinement frequency to 3 as the default
value, mostly based on empirical evaluation on applications outside
the benchmark set reported here.

\begin{figure}[tbp]
  \centering
  \subcaptionbox{Data profile, $\tau = 10^{-2}$    \label{fig:refhh_1}}{
    \includegraphics[width=0.48\textwidth]{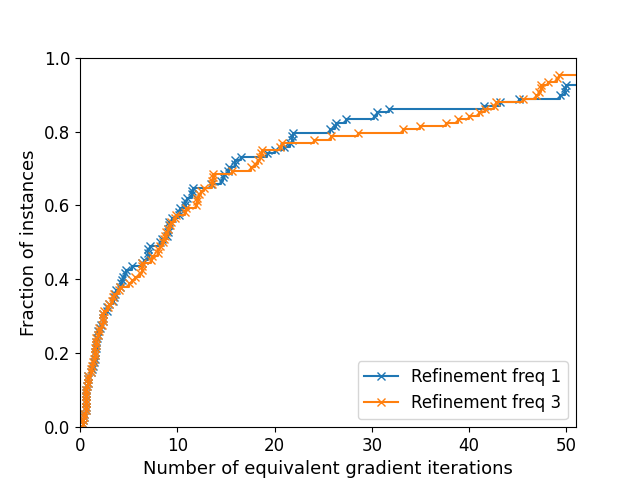}
  }
  \subcaptionbox{Data profile, $\tau = 10^{-4}$    \label{fig:refhh_2}}{
    \includegraphics[width=0.48\textwidth]{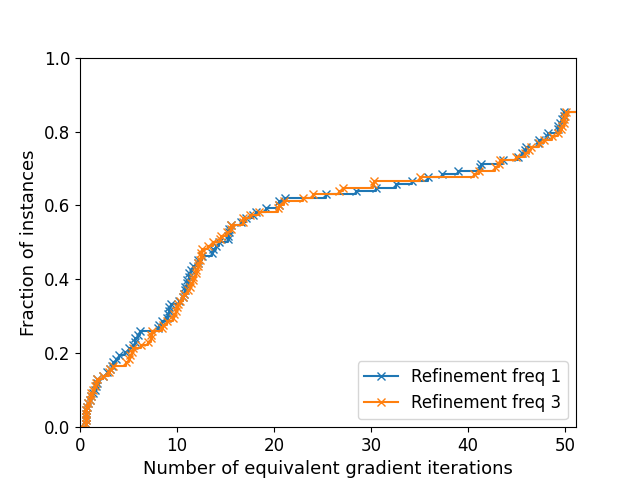}
  }
  
  \caption{Data profiles with different frequencies of the Refinement step.}
  \label{fig:refhh}
  
\end{figure}

\subsection{Categorical variables: original versus extended space}
\label{s:expcat}
In this section we analyze the performance of the optimization
algorithm with the two different representations for categorical
variables discussed in Sect.~\ref{s:extspace}. We use the same
approach as in the previous section; all algorithm parameters are set
to their default values indicated in the previous section. For this
set of experiments we use the problem instances with categorical
variables only, see Table \ref{tab:instances}, as well as their
enlarged version with dimension multiplier $s=2$. To reduce variance,
we use the same points for the Initialization step, regardless of the
choice of extended or original space. This is accomplished as follows:
for every instance and every random seed, we generate the initial
samples in extended space (the number of samples is chosen according
to \eqref{eq:initformula}); we then map these points to their
equivalent in the original space, and use them to initialize the
optimization in original space. As a consequence, the optimization in
original space uses more initial samples than it normally
would. Results are reported in Fig.~\ref{fig:cat}.

\begin{figure}[tbp]
  \centering
  \subcaptionbox{Performance profile, $\tau = 10^{-2}$    \label{fig:cat_1}}{
    \includegraphics[width=0.48\textwidth]{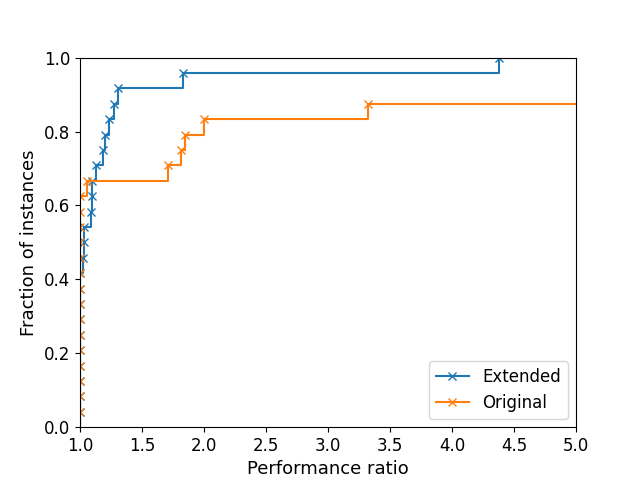}
  }
  \subcaptionbox{Data profile, $\tau = 10^{-2}$    \label{fig:cat_2}}{
    \includegraphics[width=0.48\textwidth]{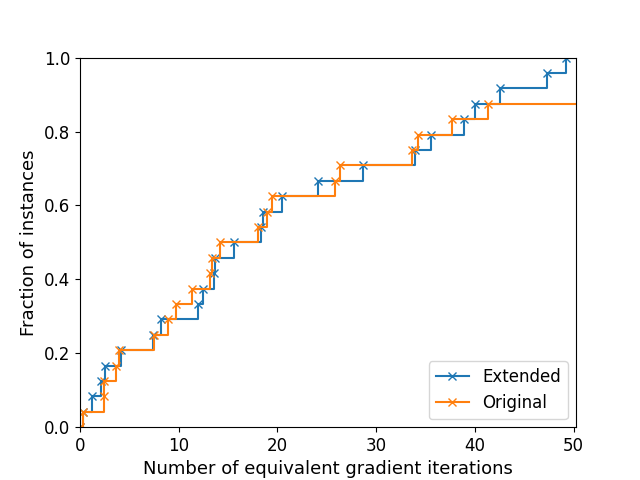}
  }\\
  \subcaptionbox{Performance profile, $\tau = 10^{-4}$    \label{fig:cat_3}}{
    \includegraphics[width=0.48\textwidth]{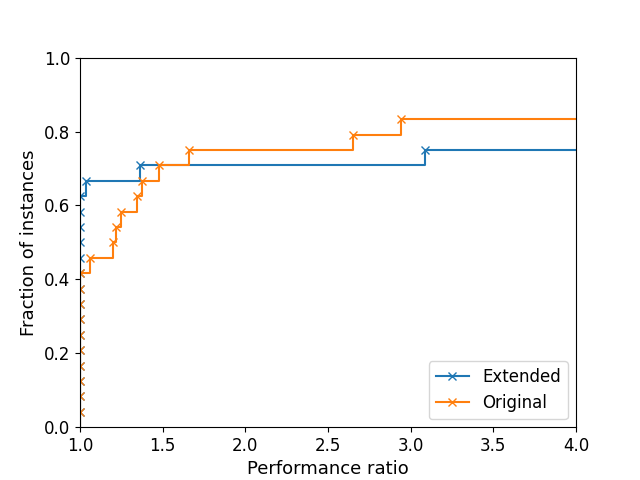}
  }
  \subcaptionbox{Data profile, $\tau = 10^{-4}$    \label{fig:cat_4}}{
    \includegraphics[width=0.48\textwidth]{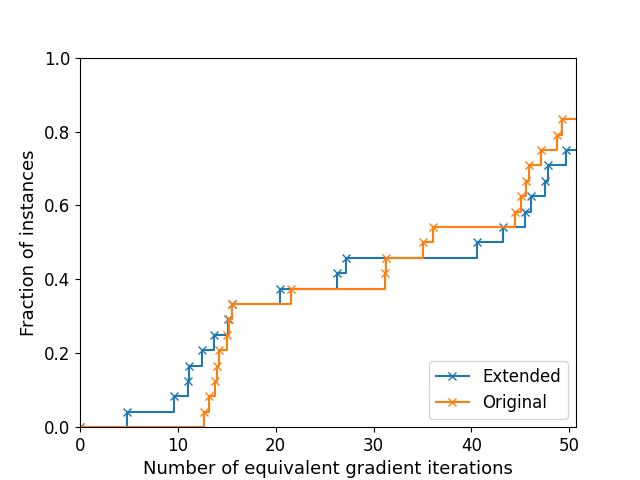}
  }
  
  \caption{Performance profiles (left) and data profiles (right) for optimization in extended and original space.}
  \label{fig:cat}
  
\end{figure}

The plots for $\tau = 10^{-2}$ indicate a clear superiority for
optimization in extended space; with $\tau = 10^{-4}$ the difference
is not so clear, with the extended space formulation showing better
performance on some instances that can be solved quickly, but the
original space formulation converges on more instances in the long
run. A possible explanation for this behavior is the fact that the
optimization algorithm is quite robust to innaccuracies in the
surrogate model (because it favors points with large distance from
those already evaluated, and it performs local search around the best
known point), hence no reasonable formulation for the categorical
variables will perform too poorly. We also remark that with $\tau =
10^{-3}$ (not reported here), the plots are similar to $\tau =
10^{-4}$, but the difference is less pronounced. Overall, the extended
space formulation is to be preferred: it manages to close $99\%$ of
the gap on all test problems (as indicated by the plots for $\tau =
10^{-2}$), and for almost all problems, it does so very quickly; in
the long run the original space formulation solves a few more
instances to high precision, but this does not seem enough to offset
the advantage of the extended space formulation in the initial
iterations.

To compare the original and extended space formulations from a
different angle, we also set up an experiment to assess the usefulness
of the surrogate models constructed in these two spaces. This is
accomplished as follows. For every function with categorical variables
in Table \ref{tab:instances}, we generate an initial sample of $k$
points, with $k \in \{(n+1), 5(n+1), 10(n+1), 50(n+1)\}$. These points
are generated as a latin hypercube design maximizing the minimum
distance between points, and they are generated in extended space. We
then construct a surrogate model interpolating at these points,
generate 20000 additional random points in the domain of the function,
and rank these 20000 points using the surrogate model. More precisely,
assume w.l.o.g.\ (up to reordering) that the interpolation points are
sorted by increasing function value, i.e., $f(x^1) \le f(x^2) \le
\dots \le f(x^k)$; for every point $x$ we infer its position in the
sorted list $f(x^1), \dots, f(x^k)$ by using its surrogate model value
$s_k(x)$. We then compare this number with the true position of $f(x)$
in the sorted list, and record the absolute value of the difference
between the two numbers. This is a measure of how well the surrogate
model is able to rank unseen points as compared to the known
interpolation points. We record the average and standard deviation of
the difference over the 20000 randomly generated points. The same
procedure is repeated in the original space, using exactly the same
points mapped from the extended space. The results are reported in
Table \ref{tab:cat_accuracy}.

\begin{table}[tbp]
  \centering
  {\small
    \setlength{\tabcolsep}{2pt}
  \begin{tabular}{|l|r|r|r|r|r|r|r|r|}
    \hline
    & \multicolumn{2}{c|}{$(n+1)$ points} & \multicolumn{2}{c|}{$5(n+1)$ points} & \multicolumn{2}{c|}{$10(n+1)$ points} & \multicolumn{2}{c|}{$50(n+1)$ points} \\
    \cline{2-9}
    RBF type & Original & Extended & Original & Extended  & Original & Extended  & Original & Extended \\
    \hline
    %% Cubic & 4.12 (1.76) & 4.24 (1.86) & 15.72 (8.02) & 14.34 (8.20) & 28.78 (16.24) & 25.42 (15.93) & 120.51 (83.55) & 102.93 (80.13) \\
    %% Gaussian & 4.76 (2.32) & 4.47 (2.36) & 20.88 (11.87) & 20.20 (12.27) & 39.54 (25.22) & 38.10 (25.87) & 193.08 (131.79) & 186.67 (133.65) \\
    %% Linear & 3.46 (1.39) & 3.35 (1.34) & 15.20 (7.86) & 14.74 (7.72) & 27.14 (15.14) & 25.52 (14.52) & 119.11 (81.46) & 107.63 (78.18) \\
    %% Multiquadric & 3.45 (1.40) & 3.35 (1.35) & 15.42 (8.05) & 14.50 (7.71) & 26.98 (15.22) & 25.37 (14.60) & 118.07 (82.39) & 107.63 (78.98) \\
    %% Thin plate splines & 4.08 (1.79) & 4.25 (1.87) & 15.37 (7.98) & 13.87 (8.02) & 28.09 (16.19) & 24.96 (15.88) & 117.59 (82.07) & 101.74 (77.91) \\
    Cubic & 4.1 (1.8) & 4.2 (1.9) & 15.7 (8.0) & 14.3 (8.2) & 28.8 (16.2) & 25.4 (15.9) & 120.5 (83.5) & 102.9 (80.1) \\
    Gaussian & 4.8 (2.3) & 4.5 (2.4) & 20.9 (11.9) & 20.2 (12.3) & 39.5 (25.2) & 38.1 (25.9) & 193.1 (131.8) & 186.7 (133.6) \\
    Linear & 3.5 (1.4) & 3.3 (1.3) & 15.2 (7.9) & 14.7 (7.7) & 27.1 (15.1) & 25.5 (14.5) & 119.1 (81.5) & 107.6 (78.2) \\
    Multiquad. & 3.4 (1.4) & 3.3 (1.3) & 15.4 (8.0) & 14.5 (7.7) & 27.0 (15.2) & 25.4 (14.6) & 118.1 (82.4) & 107.6 (79.0) \\
    Thin pl.\ sp. & 4.1 (1.8) & 4.2 (1.9) & 15.4 (8.0) & 13.9 (8.0) & 28.1 (16.2) & 25.0 (15.9) & 117.6 (82.1) & 101.7 (77.9) \\
    \hline
  \end{tabular}
  }
  \caption{Average (standard deviation) of the absolute difference between the true rank and the inferred rank of random points, in the original and the extended space.}
  \label{tab:cat_accuracy}
\end{table}

Even though the standard deviations are fairly high, the averages
indicate that the extended space formulation is able to better predict
the rank of unseen points. Indeed, the average rank errors are smaller
for the extended space model in 18 out of the 20 cases reported in
Table~\ref{tab:cat_accuracy}, and the only two cases in which the
extended space has higher average error are recorded when the number
of interpolation points is small ($n+1$), so that the differences
between extended and original space are small in the absolute
sense. For a larger number of interpolation points, the surrogate
model in extended space is consistently better. This, together with
the previous set of experiments, and the theoretical justification
given in Sect.~\ref{s:extspace}, gives a strong indication of the
benefits of our choice.

\subsection{Comparison with existing open-source derivative-free solvers}
\label{s:litcomp}
We compare the performance of RBFOpt, parametrized according to the
results discussed in previous sections, with several derivative-free
solvers that support categorical variables, namely:
\begin{itemize}
\item Nevergrad \cite{nevergrad} version 0.4.0, a collection of
  evolutionary algorithms for hyperparameter optimization; here, we
  test three algorithms that are recommended by \cite{nevergrad} for
  their versatility and generally good performance: OnePlusOne, PSO
  (particle swarm optimization), and TwoPointsDE.
\item NOMAD \cite{nomad} version 3.9.1, an implementation of the mesh adaptive
  direct search algorithm \cite{audetdennis04}.
\item Optuna \cite{optuna2019} version 2.3.0, a hyperparameter
  optimization algorithm that uses a tree-structured Parzen estimator
  to deal with categorical variables.
\item Scikit-Optimize \cite{scikit-optimize} version 0.8.1, a Bayesian
  optimization algorithm using Gaussian processes (using the function
  {\tt gp\_minimize}).
\item SMAC (sequential model-based algorithm configuration)
  \cite{hutter11smac} version 0.13.1, another Bayesian optimization
  algorithm using Gaussian processes\footnote{We used the {\tt
      SMAC4BO} interface; at the time of writing this paper, the other
    interfaces did not work in the available beta version of the
    software package.}.
\end{itemize}
This selection covers the most popular methodologies for
derivative-free optimization with categorical variables. We remark
that in the above list, only NOMAD is developed for derivative-free
optimization in the traditional sense, whereas the other software
target hyperparameter optimization problems, which generally have
added complications (e.g., the objective function evaluation are
nondeterministic, and there may be complicated constraints involving
the categorical variables); nonetheless, all these algorithms can be
applied to black-box optimization problems with categorical variables
(a comparison on a hyperparameter optimization problem is given in
Sect.~\ref{s:hpo}). All algorithms except NOMAD provide a Python
library (NOMAD's Python library does not support categorical
variables).

\begin{figure}[htb]
  \centering
  \subcaptionbox{Performance profile, $\tau = 10^{-2}$    \label{fig:ng_1}}{
    \includegraphics[width=0.48\textwidth]{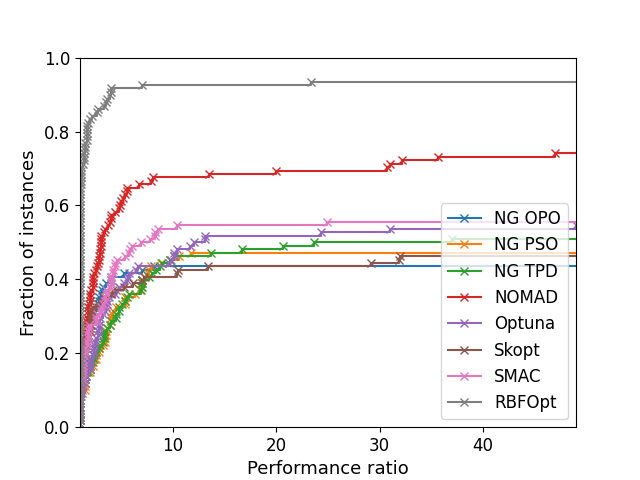}
  }
  \subcaptionbox{Data profile, $\tau = 10^{-2}$    \label{fig:ng_2}}{
    \includegraphics[width=0.48\textwidth]{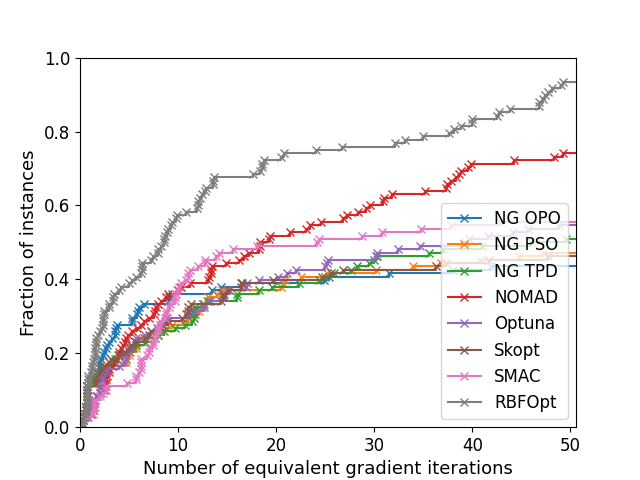}
  }\\
  \subcaptionbox{Performance profile, $\tau = 10^{-4}$    \label{fig:ng_3}}{
    \includegraphics[width=0.48\textwidth]{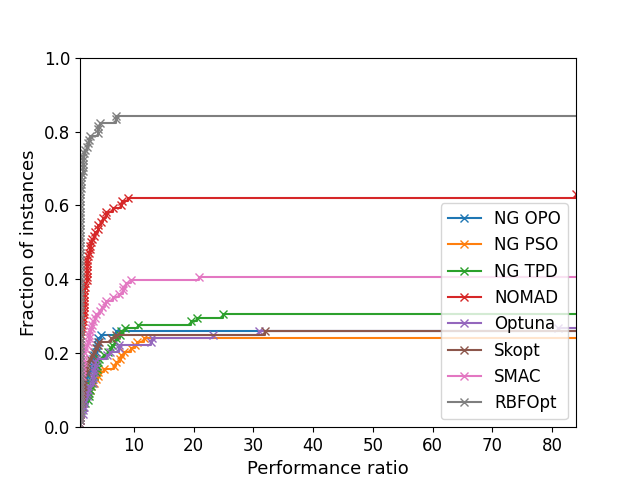}
  }
  \subcaptionbox{Data profile, $\tau = 10^{-4}$    \label{fig:ng_4}}{
    \includegraphics[width=0.48\textwidth]{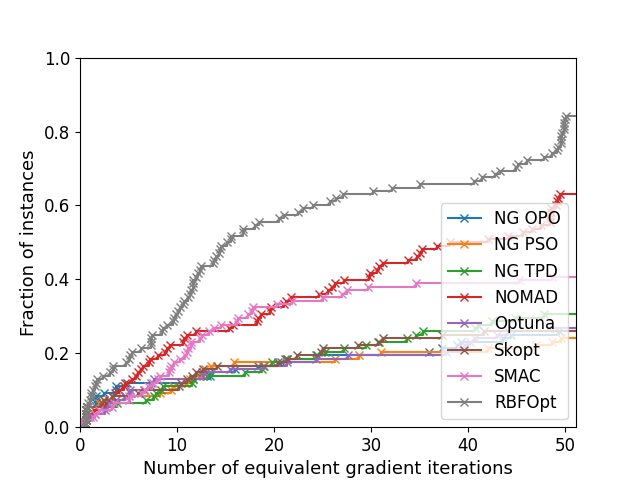}
  }
  
  \caption{Performance profiles (left) and data profiles (right)
    comparing: Nevergrad OnePlusOne, Nevergrad Particle Swarm
    Optimization, Nevergrad TwoPointsDE, NOMAD, Optuna,
    Scikit-Optimize, SMAC, and RBFOpt.}
  \label{fig:ng}
  
\end{figure}

We report performance and data profiles in Fig.~\ref{fig:ng}. Several
remarks are in order to clarify the experimental setup of this
section. First, we try to initialize each algorithm with the same set
of points to reduce variance: RBFOpt, Nevergrad, Scikit-Optimize are
initialized with the experimental design generated by RBFOpt, NOMAD is
initialized with the best point in the experimental design generated
by RBFOpt, while Optuna and SMAC use their own initialization
procedures. Second, due to resource constraints we impose a limit of 3
hours of CPU time for each problem instance; the only two algorithms
that hit the time limit are Scikit-Optimize and SMAC, most likely due
to the optimization of the expected improvement criterion, whereas all
other algorithms are considerably faster (e.g., RBFOpt takes on
average 5 minutes per instance, Scikit-Optimize almost 2 hours on
average). More precisely, Scikit-Optimize times out on 1053 instances,
whereas SMAC times out on 1720 instances, out of 2160. Since we
observed cases where Scikit-Optimize and SMAC take more than a
day to hit the function evaluation limit, the time limit is necessary
to conclude the experimental evaluation within a reasonable time
frame. We remark that the slow down when the number of variables or
the function evaluation budget are large is a known limitation of
Bayesian optimization methods \cite{eriksson2019scalable}. Despite
hitting the time limit, the performance of Scikit-Optimize and SMAC is
comparable to that of the other hyperparameter optimization
algorithms, because the slow down only occurs after a few hundred
function evaluations (the evaluation limit is often $\ge 1000$), which
is sufficient for them to find a good solution.

Fig.~\ref{fig:ng} shows that on this set of test instances, the
comparison is heavily in favor of RBFOpt. NOMAD is the second best
solver, but RBFOpt performs better by a noticeable margin; SMAC is
fairly close to NOMAD for $\tau = 10^{-4}$, whereas all remaining
algorithms are far from achieving the same performance level as the
top solvers.

\subsection{Parallel optimization}
To assess the performance of the parallel version of the optimization
algorithm, we modify our test functions so that each objective
function evaluation waits for $X$ seconds, where $X$ is a random variable,
before returning a value. Thus, we simulate the effects of a
time-consuming objective function oracle. We test two possible
distributions for $X$, both of which are log-normal: in the first case
$\log X$ is distributed as ${\cal N}(3, 0.5)$, where ${\cal N}(\mu,
\sigma)$ is the normal distribution with mean $\mu$ and standard
deviation $\sigma$; in the second case $\log X$ is distributed as
${\cal N}(4, 0.75)$. In both cases we truncate the distributions at
300 seconds, i.e., each objective function evaluation takes at most
300 seconds. Notice that the expected value of the first distribution
is $\approx 20$ (seconds), the expected value of the second
distribution is $\approx 55$. We denote the first case as the ``Faster
evaluation'' set, the second case as the ``Slower evaluation'' set.
We use the same budget of $50(n+1)$ function evaluations, and run the
algorithm with 1, 2, 4, 8 or 16 CPUs\footnote{Although it may seem
  natural to set a function evaluation budget that depends on the
  number of parallel threads, we use a fix budget for practical
  reasons: with 108 problem instances, 20 random seeds for each
  instance, a budget of $50(n+1)$ evaluations, at approximately 1
  minute per function evaluation these experiments already take more
  than one year of CPU time.}. To assess the speedup achieved by the
parallel algorithm, we report the wall-clock time to converge within
$0.1\%$ of the optimal solution. Notice that the larger the number of
CPUs, the faster the evaluation budget is depleted; since the parallel
algorithm is unlikely to be as efficient as the serial version,
as the number of CPUs increases we expect to converge on a smaller
number of instances. Thus, when reporting the average time to
convergence we only consider instances and random seeds for which all
the variants analyzed in this comparison determine the optimum (up to
the specified tolerance). Additionally, we report the number of
instances of which convergence to the specified tolerance is
attained. The data is reported in Table~\ref{tab:parallelcpu}; we use
shifted geometric means for the wall-clock times, defined as
$\left(\prod_{i=1}^k (t_i+1)\right)^{1/k}-1$ for a set of $k$ values
$t_1,\dots,t_k$. In total, there are 568 combinations of instances and
random seeds for which all variants converge on the ``Faster
evaluation'' set, and 550 on the ``Slower evaluation'' set.

\begin{table}
  \centering
  \begin{tabular}{|l|r|r|r|r|r|r|}
  \hline
  Num & \multicolumn{3}{c|}{Faster evaluation ($\approx 20$ sec)}
  & \multicolumn{3}{c|}{Slower evaluation ($\approx 55$ sec)} \\
  \cline{2-7}
  CPUs & Time & Speedup & \# conv.\ & Time & Speedup & \# conv.\ \\
  \hline
  1  &  797.0 & 1.00 & 1255 & 2405.5 & 1.00 & 1255 \\
  2  &  473.5 & 1.68 & 1274 & 1490.3 & 1.61 & 1260 \\
  4  &  307.2 & 2.59 & 1195 &  958.7 & 2.51 & 1175 \\
  8  &  221.2 & 3.60 & 982  &  708.8 & 3.39 & 975  \\
  16 &  176.8 & 4.50 & 635  &  480.1 & 5.01 & 620  \\
  \hline
  \end{tabular}
  \caption{Shifted geometric mean of the wall-clock time to converge to an optimal solution.}
  \label{tab:parallelcpu}
\end{table}

Table~\ref{tab:parallelcpu} shows that parallel optimization is not as
efficient as serial optimization: the speedup for using $c$ CPUs is
roughly $\sqrt{c}$ in our tests. However, in certain applications this
is still a favorable tradeoff, as multiple CPUs are easy to obtain and
wall-clock time can be important. In particular, on this set of test
instances using up to 4 CPUs increases the speed of the optimization,
with a negligible effect on the number of instances on which the
algorithm converges. For 8 or more CPUs the algorithm converges on
significantly fewer instances as compared to serial optimization
(about 50\% of the instances, with 16 CPUs); however, we emphasize
once again that in these tests we keep the same function evaluation
budget for all variants of the algorithm, therefore we are likely to
run out of budget quickly with 8 or 16 CPUs. In other words, the low
number of instances on which the 8-CPU and 16-CPU version of the
parallel optimization algorithm converges implies that parallel
optimization is less efficient than serial optimization for the same
budget, but the significant speedups indicate that is is more
efficient for the same amount of wall-clock time. Finally, changing
the distribution of the objective function evaluation times seems to
have little effect in these tests: in the ``Slower evaluation''
experiments, despite a much larger mean evaluation time and an
increased variance, the recorded speedup factors are very similar to
the ``Faster evaluation'' experiments.

\subsection{Application to hyperparameter optimization}
\label{s:hpo}

\begin{table}[tbp]
  \centering
  \begin{tabular}{|l|r|r|p{7cm}|}
    \hline
    Name & Type & Domain & Description \\
    \hline
        {\tt n\_estimators} & Int. & $\{10, 20, \dots, 1000\}$ & Number of trees \\
        {\tt criterion} & Cat. & $\{0, 1\}$ & Measure of split quality \\
        {\tt max\_depth} & Int. & $\{5, 6, \dots, 99\} \cup \{\infty\}$ & Maximum depth \\
        {\tt min\_samples\_split} & Int. & $\{2, 3, \dots, 20\}$ & Minimum number of samples required to split an internal node \\
        {\tt min\_samples\_leaf} & Int. & $\{1, 2,\dots, 10\}$ & Minimum number of samples required to be a leaf \\
        {\tt min\_weight\_fraction\_leaf} & Real & $[0, 0.5]$ & Minimum weighted fraction of the sum total of weights required to be at a leaf node \\
        {\tt max\_features} & Cat. & $\{0, 1, 2\}$ & Number of features to consider when looking for the best split \\
        {\tt min\_impurity\_decrease} & Real & $[0, 1]$ & A node will be split if this split induces a decrease of the impurity greater than or equal to this value \\
        {\tt class\_weight} & Cat. & $\{0, 1, 2\}$ & Weights associated with classes \\
        {\tt ccp\_alpha} & Real & $[0, 1]$ & Complexity parameter used for minimal cost-complexity pruning \\
        \hline
  \end{tabular}  
  \caption{Hyperparameters of the random forest classifier. For {\tt max\_features}, the choices are $\sqrt{\text{num\_features}}, \log (\text{num\_features}), \text{num\_features}$; for {\tt class\_weight}, the choices are given by three vectors of weights that attempt to rebalance the proportion of samples in each class in different ways (uniform, proportional to frequency, proportional to the square root of the frequency).}
  \label{tab:rfparameters}
\end{table}

\begin{figure}[tbp]
  \subcaptionbox{RBFOpt, original space    \label{fig:rf_1}}{
    \includegraphics[width=0.48\textwidth]{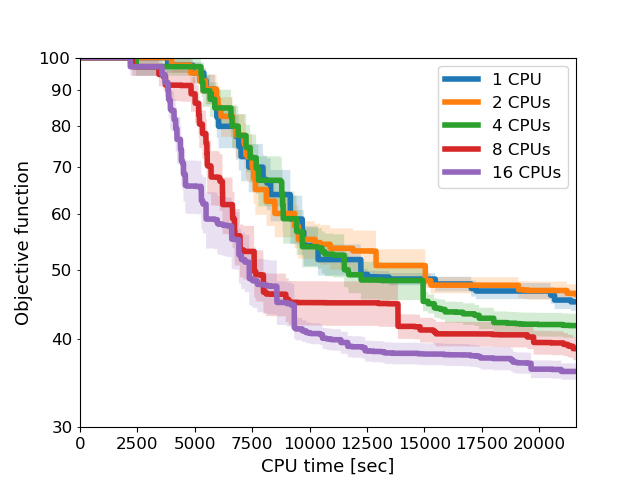}
  }
  \subcaptionbox{Comparison of several algorithms    \label{fig:rf_2}}{
    \includegraphics[width=0.48\textwidth]{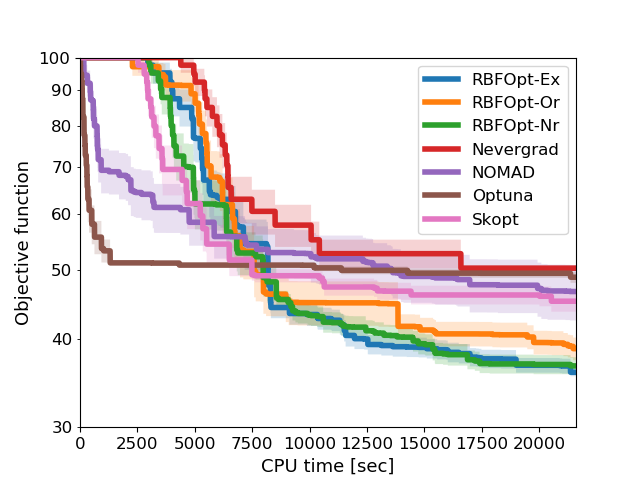}
  }\\
  \subcaptionbox{RBFOpt, extended space    \label{fig:rf_3}}{
    \includegraphics[width=0.48\textwidth]{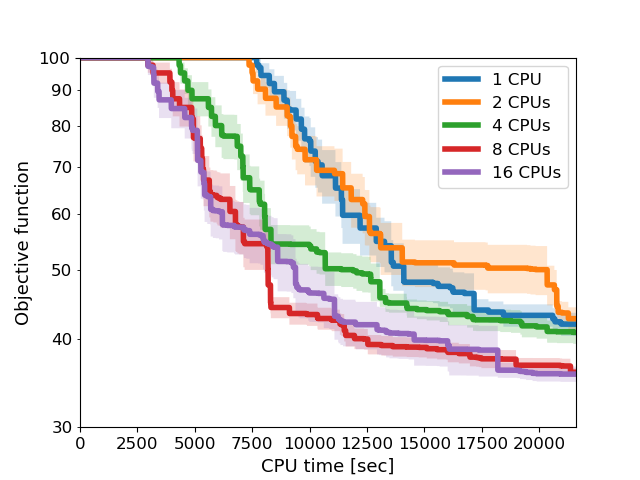}
  }
  \subcaptionbox{Comparison of several algorithms (adjusted time)    \label{fig:rf_4}}{
    \includegraphics[width=0.48\textwidth]{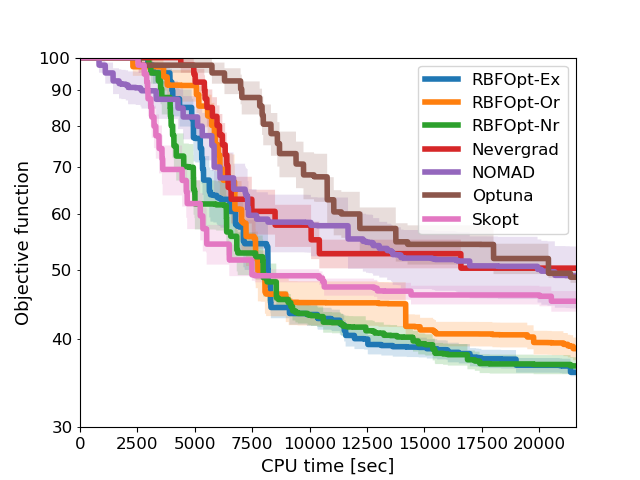}
  }
  
  \caption{Average objective function value over wall-clock time, for
    the optimization of the hyperparameters of a random forest
    classifier. The shaded regions represent the area containing the
    average plus/minus the standard error. In subfigure
    \ref{fig:rf_2}, the ``RBFOpt-Or'' and ``RBFOpt-Ex'' curves use the
    default settings of the algorithm in original and extended space,
    respectively; the ``RBFOpt-Nr'' version is like ``Extended'' but
    skips the Refinement step. In subfigure \ref{fig:rf_4}, we adjust
    the plots so that all algorithms report time in a similar way, see
    discussion in the main text. Subfigures \ref{fig:rf_2} and
    \ref{fig:rf_4} use 8 CPUs.}
  \label{fig:rf}
\end{figure}

We evaluate the performance of the optimization algorithm to optimize
the hyperparameters of a random forest classifier on a specific
dataset. We use the RandomForestClassifier class implemented in
Scikit-learn \cite{scikit-learn}, trained on the ``forest cover type''
dataset. This is a classification dataset with 581012 samples of
dimension 54, and 7 classes; see
\url{http://archive.ics.uci.edu/ml/datasets/Covertype} for more
information. The RandomForestClassifier has 10 hyperparameters, listed
in Table \ref{tab:rfparameters}; three of them are categorical, but
one of them has only two possible values (the ``criterion'' parameter)
and it is therefore treated as a binary variable. We evaluate the
performance of a classifier by 5-fold cross validation, and use the
average performance on the test set as the objective function. To
transform it into a minimization problem, we computed the objective
function as $100$ minus the recorded accuracy. The CPU time for a
single evaluation of the objective function varies a lot, depending on
the chosen hyperparameters; it is typically between $100$ and $1000$
seconds, but it can take up to a few hours. We use default values
for all parameters of RBFOpt. We compare several algorithms:
\begin{itemize}
\item RBFOpt with the extended space formulation of categorical
  variables;
\item RBFOpt with the original space formulation of categorical
  variables;
\item Nevergrad with the TwoPointsDE algorithm, which has the best
  performance for small $\tau$ in the experiments of Sect.~\ref{s:litcomp};
\item NOMAD, using the p-MADS parallel version;
\item Optuna;
\item Scikit-Optimize, using the {\tt gp} base estimator and one-hot
  encoding for categorical variables.
\end{itemize}
SMAC is excluded from this set of experiments due to technical
problems when running the available beta version on multiple CPUs. We
additionally tested the Coop-MADS variant of NOMAD, but we do not
report the corresponding results because p-MADS proved to be superior
on this problem instance. Note that the one-hot encoding employed in
Scikit-Optimize uses the same principle as the extended space
formulation of this paper. The wall-clock time limit is set to 6 hours
for all algorithms.

We run 20 different random seeds for each of these algorithms, using
these seeds to initialize the optimization algorithms and the training
of the classifier, thereby making the training deterministic and
reproducible. As remarked in Sect.~\ref{s:litcomp}, RBFOpt and NOMAD
assume that the objective function is deterministic, therefore fixing
the random seeds is justified for these algorithms. For the remaining
solvers, which target hyperparameter optimization problems, the random
seed does not have to be fixed and in principle their performance
could improve if we allow the solvers to evaluate the same point
multiple times with different random seeds for the training phase. In
particular, with our setup each algorithm observes only one
realization of the chosen generalization error estimator (i.e.,
accuracy using 5-fold cross validation) for a given values of the
hyperparameters, whereas hyperparameter optimization solvers can in
principle observe multiple realizations and use this information to
their advantage. We do not explore this possibility, noting that given
the relatively tight wall-clock time limit, we expect the approach
described above (i.e., fixed random seed for each run) to be a
reasonable trade off. We report the average objective function value
over time, where the average is taken with respect to the 20 random
seeds. Results are given in Fig.~\ref{fig:rf}. The runs corresponding
to the same random seed are initialized with the same set of points
for RBFOpt, Nevergrad and Scikit-Optimize: this has the goal of
reducing variance in the experiments. NOMAD is initialized using the
first point in the latin hypercube design generated by RBFOpt, while
Optuna uses its own initialization strategy.

In Fig.~\ref{fig:rf_1} and \ref{fig:rf_3}
we plot the objective function value for RBFOpt in original and
extended space, respectively, using up to 16 CPUs.  The plots showcase
the benefits of asynchronous parallel optimization when the main
concern is the wall-clock time, rather than overall efficiency of the
search in terms of the number of objective function evaluations. When
using multiple CPUs, not only we improve the objective function much
faster, but we eventually find better solutions on
average. Furthermore, the extended space formulation performs
noticeably better than the original space formulation, supporting our
conclusions from Sect.~\ref{s:expcat}.

In Fig.~\ref{fig:rf_2} we compare several variants of RBFOpt with
other solvers, using 8 CPUs. We choose 8 CPUs rather than 16 so that
the problem is still moderately difficult. The plots indicate that
RBFOpt, in all its variants, attains lower objective function values
than all the remaining solvers, and the corresponding curves are below
the other solvers for most of the time interval considered. The plots
for NOMAD and Optuna decrease much faster than the other algorithms at
the very beginning, but this is mostly due to the fact that both NOMAD
and Optuna report the result of each function evaluation immediately,
whereas the other algorithms only report results after evaluating a
first batch of 16 points using Python's {\tt
  multiprocessing.Pool}. This implies that for NOMAD and Optuna the
curves start improving after the first evaluation, whereas for all
other algorithms no improvement is reported until the first 16
function evaluations are completed. To put all algorithms on equal
footing, we plot the same data in Fig.~\ref{fig:rf_4}, with the
difference that we now assign to the first 15 function evaluations the
same time stamp as the 16th. In this fairer setting, NOMAD and Optuna
no longer enjoy an advantage in the early stages of the
optimization. We point out that in the recent paper
\cite{lakhmiri2019hypernomad}, NOMAD is shown to be more effective
than RBFOpt on problems with heavily constrained feasible regions: on
those problems, the local nature of NOMAD allows staying inside the
(black-box) constraints, while the global nature of RBFOpt leads to
the exploration of a large proprtion of infeasible points, even if
their violation is penalized. However, the problem considered in this
section is unconstrained, and RBFOpt improves the objective function
more quickly than NOMAD or any of the other solvers.

Summarizing, RBFOpt performs better than all other
tested algorithms on this hyperparameter optimization problem; among
RBFOpt variants, compared using 8 CPUs, the extended space formulation
has a clear advantage, and the Refinement step has little impact in
these experiments, even though it was highly beneficial in our
previous tests.

\section{Conclusion}
\label{s:conclusion}
Our extensive numerical evaluation indicates that many ingredients
contribute to the effectiveness of the optimization algorithm
implemented in RBFOpt. More precisely, this paper explores the impact
of different ways of modeling categorical variables (and discusses how
to deal with the ensuing difficulties), the use of RBF interpolants
without the unisolvence property, refinement search to locally improve
solutions, and asynchronous parallel optimization. Individually, each
of these components provides some benefit, with the refinement search
having the most noticeable impact in serial optimization. Together,
they contribute to a powerful optimization algorithm that appears to
be one of the most efficient derivative-free solvers available for
highly nonconvex problems, and that is capable of handling
unconstrained mixed-variable problems.

\bibliographystyle{plainurl} \bibliography{phd}

% -----------------------------------------------------------------------------
%   END DOCUMENT
% -----------------------------------------------------------------------------
\end{document}